\theoremstyle{plain}
\newtheorem{theorem}{Theorem}[section]
\newtheorem{proposition}[theorem]{Proposition}
\newtheorem{lemma}[theorem]{Lemma}
\theoremstyle{definition}
\newtheorem{definition}[theorem]{Definition}
\theoremstyle{remark}
\DeclareRobustCommand\onedot{\futurelet\@let@token\@onedot}
\def\@onedot{\ifx\@let@token.\else.\null\fi\xspace}
\def\eg{\emph{e.g}\onedot} 
\def\ie{\emph{i.e}\onedot}
\DeclarePairedDelimiter\norm{\lVert}{\rVert}
\DeclarePairedDelimiter\innorm{\langle}{\rangle}
\DeclareMathOperator{\St}{\mathcal{S}}
\DeclareMathOperator{\A}{\mathcal{A}}
\DeclareMathOperator{\Z}{\mathcal{Z}}
\newcommand{\Pc}{\mathcal{P}}
\newcommand{\Uc}{\mathcal{U}}
\definecolor{cornflower}{RGB}{100,149,237}
\def\eqref#1{equation~\ref{#1}}
\def\1{\bm{1}}
\DeclareMathAlphabet{\mathsfit}{\encodingdefault}{\sfdefault}{m}{sl}
\SetMathAlphabet{\mathsfit}{bold}{\encodingdefault}{\sfdefault}{bx}{n}
\newcommand{\R}{\mathbb{R}}
\DeclareMathOperator*{\argmin}{arg\,min}
\title{Bring Your Own (Non-Robust) Algorithm to Solve Robust MDPs by Estimating The Worst Kernel}
\author[*1]{Kaixin Wang}
\author[*1]{Uri Gadot}
\author[1]{Navdeep Kumar}
\author[1]{Kfir Levy}
\author[1,2]{Shie Mannor}
\affil[1]{Technion}
\affil[2]{NVIDIA Research}
\begin{document}
\def\thefootnote{*}\footnotetext{Authors contributed equally to this work}

\maketitle
\begin{abstract}
Robust Markov Decision Processes (RMDPs) provide a framework for sequential decision-making that is robust to perturbations on the transition kernel.
However, current RMDP methods are often limited to small-scale problems, hindering their use in high-dimensional domains. 
To bridge this gap, we present \textbf{EWoK}, 
a novel online approach to solve RMDP that \textbf{E}stimates the \textbf{Wo}rst transition \textbf{K}ernel to learn robust policies.
Unlike previous works that regularize the policy or value updates, EWoK achieves robustness by simulating the worst scenarios for the agent while retaining complete flexibility in the learning process.
Notably, EWoK can be applied on top of any off-the-shelf {\em non-robust} RL algorithm, enabling easy scaling to high-dimensional domains.
Our experiments, spanning from simple Cartpole to high-dimensional DeepMind Control Suite environments, demonstrate the effectiveness and applicability of the EWoK paradigm as a practical method for learning robust policies.
\end{abstract}

\section{Introduction}

In reinforcement learning (RL), we are concerned with learning good policies for sequential decision-making problems modeled as Markov Decision Processes (MDPs)~\citep{Puterman1994MarkovDP,Sutton1998}.
MDPs assume that the transition model of the environment is fixed across training and testing, but this is often violated in practical applications.
For example, when deploying a simulator-trained robot in reality, a notable challenge is the substantial disparity between the simulated environment and the intricate complexities of the real world, leading to potential subpar performance upon deployment.
Such a mismatch may significantly degrade the performance of the trained policy (in testing).
To deal with this issue, the robust MDP (RMDP) framework has been introduced in~\citep{iyengar2005robust,nilim2005robust,wiesemann2013robust}, aiming to learn policies that are robust to any perturbation of the transition model provided it lies within an uncertainty set.

\begin{figure*}[t]
\centering
\includegraphics[width=0.9\linewidth]{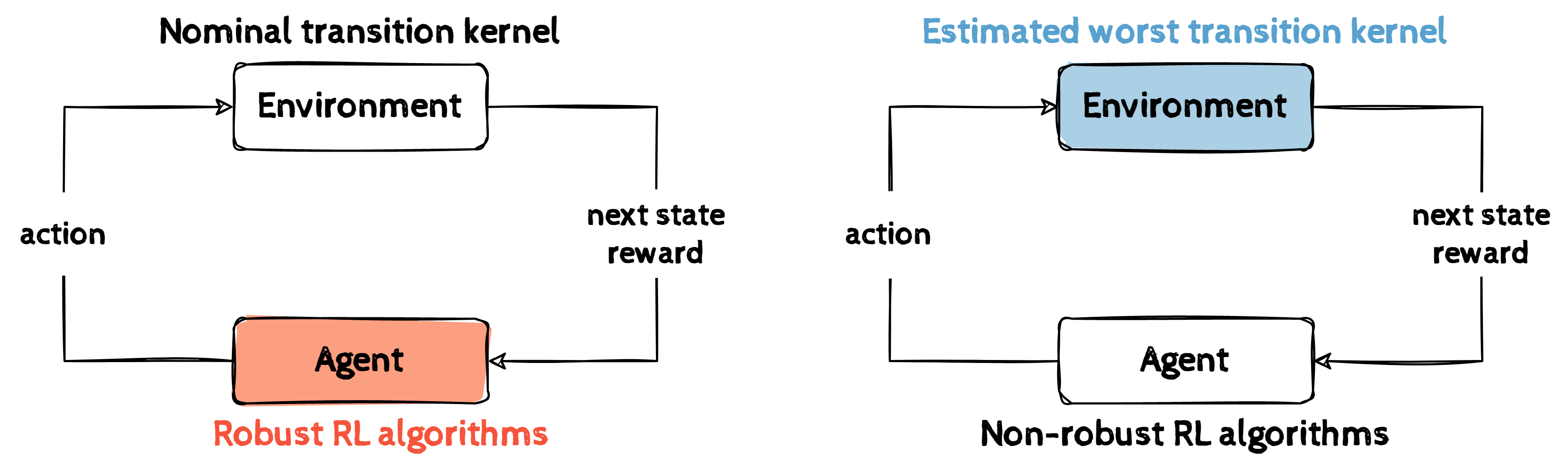}
\caption{The agent-environment interaction loop during training. \textbf{Left}: Existing methods typically regularize how an agent updates its policy to improve robustness. \textbf{Right}: Our work estimates a worst transition kernel, so the agent essentially learns its policy under the worst scenarios and can use any non-robust RL algorithm.}
\label{fig:intro}
\end{figure*}

Existing works on learning robust policies in RMDPs often suffer from poor scalability(to high-dimensional domains).
Specifically, model-based methods that solve RMDPs~\citep{wiesemann2013robust,ho2021partial,behzadian2021fast,derman2021twice,grand2021scalable,kumar2022efficient} require access to the nominal transition probability, making it difficult to scale beyond tabular settings.
While some recent works~\citep{wang2022robust,wang2022policy,kumar2022efficient,PgLprmdp} introduce model-free methods that add regularization to the learning process, the effectiveness of their methods is not validated in high-dimensional environments.
In addition, these methods are based on particular RL algorithms (\eg, policy gradient, Q learning), limiting their general applicability.
We defer a more detailed discussion on related works to Section~\ref{sec:related}.

In this work, we tackle the problem of learning robust policies in RMDPs from an alternative direction.
As shown in Figure~\ref{fig:intro}, unlike previous works that explicitly regularize the learning process, we propose to approximately sample next states from an \textbf{E}stimated \textbf{Wo}rst transition \textbf{K}ernel (EWoK) while leaving the RL part untouched.
In RMDPs, a worst transition kernel is one within the uncertainty set that leads to the minimal possible return (see Definition~\ref{dfn:adv_kernel_def}).
Intuitively, EWoK aims to situate the agent in the worst scenarios for learning policies robust to perturbations.
It can be applied on top of any (deep) RL algorithm, offering good scalability to high-dimensional domains.

Specifically, EWoK builds upon our theoretical insights into the relationship between a worst transition kernel and the nominal one.
Our characterization of the worst kernel for a KL-regularized uncertainty set concludes that it essentially modifies the next-state transition probability of the nominal kernel, discouraging the transitions to states with higher values while encouraging transitions to lower-value states.
Using this connection, we are able to sample the next states such that they are approximately distributed according to the worst transition probability.
We establish convergence of the estimated worst kernel to the true worst kernel and present a practical algorithm suitable for high-dimensional domains.

To verify the effectiveness of our method, we conduct experiments on multiple environments ranging from small-scale classic control tasks to high-dimensional
DeepMind Control tasks~\citep{tunyasuvunakool2020}.
The agent is trained on the nominal environment and tested in environments with perturbed transitions.
Since our method is agnostic to the underlying RL algorithm,
we showcase the applicability of our method on top of different non-robust algorithms.
Experiment results demonstrate that with our method, the learned policy suffers from less performance degradation when the transition kernel is perturbed, even when the perturbation is situated within an uncertainty set that is either coupled or non-KL based.

In summary, our paper makes the following contributions:
\begin{itemize}[leftmargin=15pt]
    \item To learn robust policies in RMDPs, we propose to approximately simulate the ``worst" transition kernel, instead of regularizing the learning process. This opens up a new paradigm for learning robust policies in RMDPs.
    \item We theoretically characterize the ``worst" kernel for KL uncertainty sets, which is amenable to approximate simulation for environments with large state spaces.
    \item Our method is not tied to a particular RL algorithm and can be easily integrated with any deep RL method.
    This flexibility translates to the good scalability of our method in complex high-dimensional domains.
    To the best of our knowledge, our work is the first that enjoys such flexibility among related works in RMDPs.
\end{itemize}

\section{Preliminaries}

\textit{Notations.} For a finite set $\Z$, we write the probability simplex over it as $\Delta_{\Z}$.
Given two real functions $f,g:\Z\to\R$, their inner product is $\innorm{f,g} = \sum_{z\in\Z}f(z)g(z)$.
For distributions $P,Q$, we denote the Kullback–Leibler (KL) divergence of $P$ from $Q$ by $D_\mathrm{KL}(P\,\lVert\,Q)$.

\subsection{Markov Decision Processes}

A Markov decision process (MDP)~\citep{Sutton1998, Puterman1994MarkovDP} is a tuple $(\St,\A,P,R,\gamma,\mu)$, where $\St$ and $\A$ are the state space and the action space respectively,
$P:\St\times\A\to \Delta_{\St}$ is the transition kernel, $R:\St\times\A\to \R$ is the reward function,
$\gamma\in [0,1)$ is the discount factor,
and $\mu\in\Delta_{\St}$ is the initial state distribution.
A stationary policy $\pi:\St\to \Delta_{\A}$ maps a state to a probability distribution over $\A$.
We use $P(\cdot|s,a)\in\Delta_{\St}$ to denote the probabilities of transiting to the next state when the agent takes action $a$ at state $s$.
For a policy $\pi$, we denote the expected reward and transition by:
\begin{align*}
    R^\pi(s) &= \sum_{a\in\A}\pi(a|s)R(s,a), \\
    P^\pi(s'|s) &= \sum_{a\in\A}\pi(a|s)P(s'|s,a), \quad \forall s, s' \in \St
\end{align*}
The value function $v^\pi:\St\to\mathbb{R}$ maps a state to the expected cumulative reward when the agent starts from that state and follows policy $\pi$, \ie,
\begin{equation*}
    v^\pi(s) = \mathbb{E}_{\pi, P} \left[\sum_{t=0}^\infty \gamma^t R(s_t,a_t) \Biggm\vert s_0=s \right].
\end{equation*}
It is known that $v^\pi$ is the unique fixed point of the Bellman operator $T^\pi v:=R^\pi + \gamma P^\pi v$~\citep{Puterman1994MarkovDP}.
The agent's objective is to obtain a  policy $\pi^*$ that maximizes the discounted return
\begin{equation*}
        J^\pi = \mathbb{E}_{\mu,\pi,P}\left[\sum_{t=0}^\infty \gamma^t R(s_t,a_t) \right] = \langle \mu, v^\pi\rangle.
\end{equation*}

\subsection{Robust Markov Decision Processes}

In MDPs, the system dynamics $P$ is usually assumed to be constant over time. However, in real-life scenarios, it is subject to perturbations, which may significantly impact the performance in deployment~\citep{mannor2007bias}.
Robust MDPs (RMDPs) provide a theoretical framework for taking such uncertainty into consideration, by taking $P$ as not fixed but chosen adversarially from an uncertainty set $\Pc$~\citep{iyengar2005robust,nilim2005robust}.
Since we may consider different dynamics $P$ in the RMDPs context, in the following, we will use subscript $P$ to make the dependency explicit.
The objective in RMDPs is to obtain a policy $\pi^*_\Pc$ that maximizes the robust return
\begin{equation*}
    J^\pi_{\Pc} = \min_{{P\in\Pc}} J^\pi_P.
\end{equation*}
However, 
this problem is NP-hard for general uncertainty sets while an optimal policy can be non-stationary~\citep{wiesemann2013robust}. To make RMDPs tractable, we need to make some assumptions about the uncertainty set.

\subsection{Rectangular uncertainty set}
\label{sec:pre-uncertainty}

One commonly used assumption to enable tractability for RMDPs is rectangularity.
Specifically, we assume that the uncertainty set $\Pc$ can be factorized over states-actions:
\begin{equation}
    \Pc = \underset{(s,a)\in(\St\times\A)}{\times}\Pc_{sa}, \tag{\texttt{sa}-rectangularity}
\end{equation}
where $\Pc_{sa}\subseteq\Delta_{\St}$.
In other words, the uncertainty in one state-action pair is independent of that in another state-action pair.

Under this assumption, RMDPs admit a deterministic optimal policy as in standard MDPs~\citep{iyengar2005robust,nilim2005robust}.
The rectangularity assumption also allows the robust value function to be well-defined:
\begin{equation*}
    v^\pi_\Pc = \min_{P\in \Pc} v^\pi_P, \quad\textrm{and}\quad
    v^*_\Pc = \max_{\pi} v^\pi_\Pc.
\end{equation*}
In addition, $v^\pi_\Pc$ and $v^*_\Pc$ are the unique fixed points of the robust Bellman operator $T_{\Pc}^\pi$ and the optimal robust Bellman operator $T_{\Pc}^*$ respectively, (which are) defined as
\begin{equation*}
    T_{\Pc}^\pi v (s) =\min_{P\in\Pc} T_P^\pi v (s)
    \,\textrm{and}\,
    T_{\Pc}^* v(s) = \max_{\pi} T_{\Pc}^\pi v(s).
\end{equation*}

To model perturbations on the environment dynamics, the (rectangular) uncertainty set is often constructed (to be centered) around a nominal kernel $\bar{P}$.
Since we want to measure the divergence between probability distributions, it is natural to use KL divergence~\citep{panaganti2022sample,xu2023improved,shi2022distributionally}, \ie,
\begin{equation*}
    \Pc_{sa} =\{P_{sa}\in\Delta_{\St}\mid D_\mathrm{KL}(P_{sa} \,\lVert\,\bar{P}_{sa})\le\beta_{sa}\}. 
\end{equation*}
Here $P_{sa}$ is a shorthand for $P(\cdot|s,a)$ and $\beta_{sa}$ is the uncertainty radius that controls the level of perturbation.

\section{Method}
\label{sec:method}


\begin{figure}[t]
\centering
\includegraphics[width=0.7\linewidth]{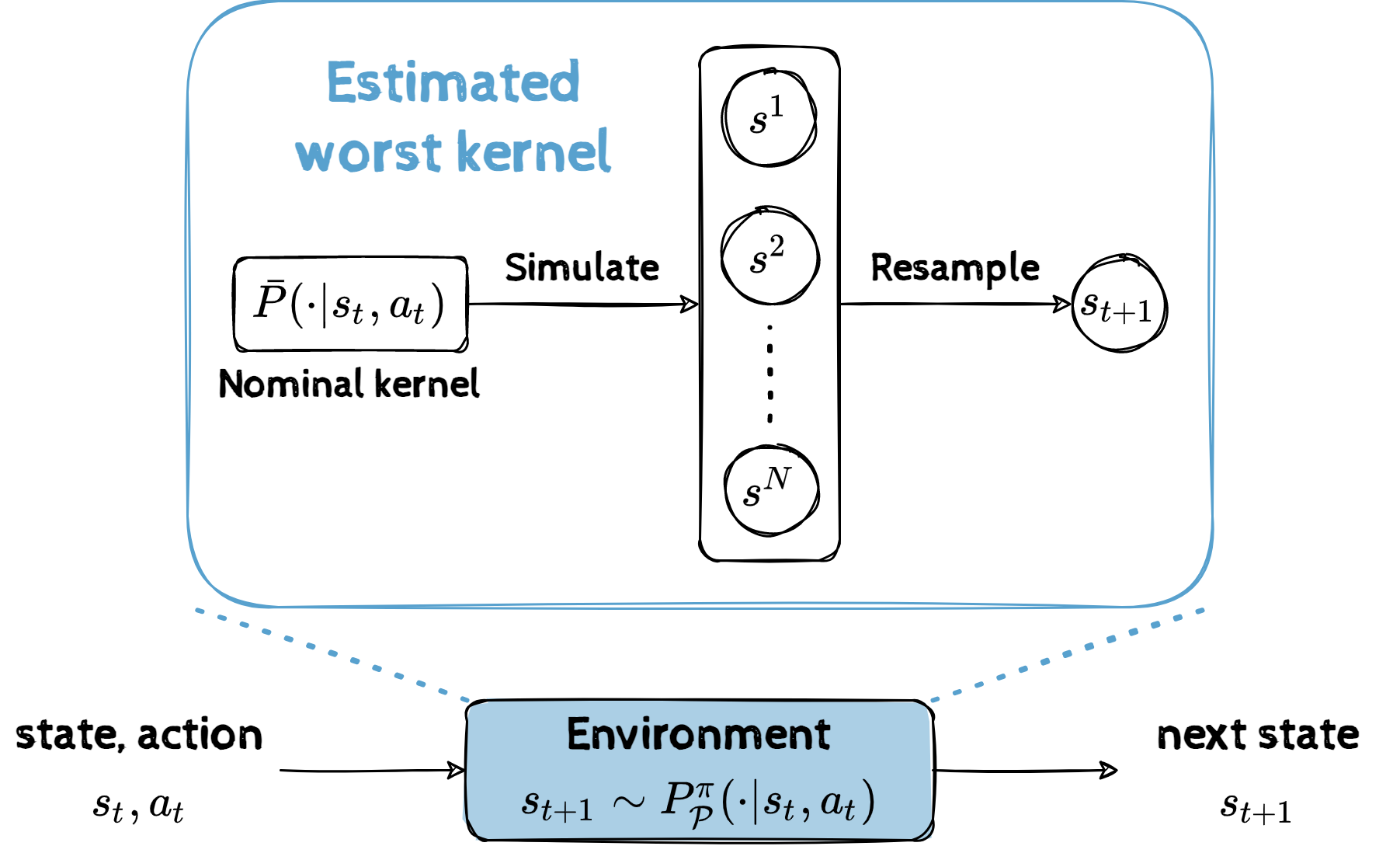}
\caption{An illustration of how next states are sampled in the estimated worst kernel.}
\label{fig:method}
\end{figure}

As introduced earlier, our work proposes to learn robust policies by approximately simulating a {\em worst transition kernel}, (which is) defined as the one within the uncertainty set that achieves minimal robust return. We formalize it below.
\begin{definition}
\label{dfn:adv_kernel_def}
For an uncertainty set $\Pc$ and a policy $\pi$, a worst kernel is defined as
\begin{equation*}
    P^\pi_\mathcal{\Pc}
    \in \argmin_{P\in\,\Pc} J^\pi_P.
\end{equation*}
\end{definition}
Training policies under this worst kernel will give us a robust policy with respect to the uncertainty set.
Note that $P^\pi_\mathcal{\Pc}$ itself is nothing more than a regular transition kernel.
Learning a policy under $P^\pi_\mathcal{\Pc}$ is no different from the standard MDP setting and we can adopt any non-robust RL algorithms to solve it.
The challenge is how to approximately simulate this worst kernel $P^\pi_{\Pc}$.
For a general uncertainty set $\mathcal{P}$, it requires an additional minimization process to find a worst kernel and it is also unclear how we can parameterize and learn $P^\pi_{\Pc}$ effectively.

To tackle this challenge, we characterize the connection between the nominal transition kernel and a worst one.
With such a connection, we are able to obtain the next states that are approximately distributed according to $P^\pi_{\Pc}(\cdot|s,a)$, by properly resampling the next states from the nominal kernel (Figure~\ref{fig:method}).
Formally, the following theorem describes this connection. All proofs are deferred to Appendix~\ref{app:proof}.
\begin{restatable}{theorem}{klworst}
\label{thm:kl}
    For a KL uncertainty set $\Pc$ and a policy $\pi$, a worst kernel is related to the nominal kernel through:
    \begin{equation*}
        P^\pi_{\Pc}(s'|s,a) = \bar{P}^\pi(s'|s,a)e^{-\delta^\pi(s')},
    \end{equation*}
    where $\delta^\pi$ is of the form
    \begin{equation}
        \delta^\pi(s') = \frac{v_{\Pc}^\pi(s')-\omega_{sa}}{\kappa_{sa}},
    \label{eqn:delta_pi}
    \end{equation}
    and satisfies
    \begin{equation}
        \label{eq:delta}
        \begin{aligned}
        &\sum_{s'\in\St} \bar{P}^\pi(s'|s,a)e^{-\delta^\pi(s')}=1,\\
        &\sum_{s'\in\St}\bar{P}^\pi(s'|s,a)e^{-\delta^\pi(s')}(-\delta^\pi(s'))=\beta_{sa}.
        \end{aligned}
    \end{equation}
\end{restatable}

Here, $\omega_{sa}$ and $\kappa_{sa}$ are implicitly defined by Eqn.~(\ref{eq:delta}). While they do not have closed forms, we can view $\omega_{sa}$ as a threshold, encouraging transitions to states with robust values lower than $\omega_{sa}$ (\ie, $\delta^\pi(s') < 0$), and discouraging transitions to states with higher robust values.
$\kappa_{sa}$ works as a temperature parameter to control how much we discourage/encourage transitions to states with high/low robust value.
More specifically, the following proposition explicates the relationship between $\omega_{sa}$ and $\kappa_{sa}$ and the uncertainty radius $\beta_{sa}$.
\begin{restatable}{proposition}{meanvar}\label{prop:meanvar}
$\omega_{sa}$, $\kappa_{sa}$ and $\beta_{sa}$ satisfy
\begin{equation*}
    \omega_{sa} = \langle P^\pi_\Pc(\cdot|s,a),v^\pi_\Pc\rangle + \beta_{sa}\kappa_{sa},
\end{equation*}
\end{restatable}

Based on theoretical results, we arrive at a method to approximately simulate a worst kernel.
As illustrated in Figure~\ref{fig:method}, we first draw a batch of states from the nominal kernel $\bar{P}(\cdot|s,a)$ , then resample the next state with probability proportional to $e^{-\delta^\pi(s')}$.
This way, next states will be approximately distributed according to $P^\pi_{\Pc}(\cdot|s,a)$.
In practice, we approximate $\delta^\pi(s')$ by
\begin{equation*}
    \hat{\delta}^\pi(s') = \frac{v(s')-\frac{1}{N}\sum_{i=1}^{N}v(s^i)}{\kappa},
\end{equation*}
where $v$ is the robust value function approximated with neural networks, and $\kappa$ is a hyperparameter controlling the robustness level (for all $s,a$, we assume $\kappa_{sa} = \kappa$).
We implement the threshold $\omega$ as the average value, a choice supported by the following proposition.
\begin{restatable}{proposition}{meanbound}
\label{prop:meanbound}
$\omega_{sa}$ is bounded as follows,
\begin{equation*}
    \langle P^\pi_\Pc(\cdot|s,a),v^\pi_\Pc\rangle\leq \omega_{sa} \leq \langle \bar{P}^\pi(\cdot|s,a),v^\pi_\Pc\rangle.
\end{equation*}
\end{restatable}
Since the $N$ next states are sampled from the nominal kernel, we can approximate an upper bound of $\omega_{sa}$ and use it as a proxy to compute $\hat{\delta}^\pi$.
Putting it together, we summarize our method in Algorithm~\ref{alg:aka}.

\begin{algorithm}[tb]
\caption{EWoK - Learning robust policy by Estimating Worst Kernel}
\label{alg:aka}
\textbf{Input}: sample size $N$, robustness parameter $\kappa$ \\
\textbf{Initialize}: initial state $s_0$, policy $\pi$ and value function $v$, data buffer
\begin{algorithmic}[1]
\FOR{$t=0,1,2,\cdots$}
    \STATE Play action $a_t\sim \pi(\cdot|s_t)$.
    \STATE Simulate next state $s^i \sim \bar{P}(\cdot|s_t,a_t), i=1,\cdots,N$, with the nominal environment dynamic.
    \STATE Choose $s_{t+1} = s^i$ with probability proportional to $e^{- \frac{v(s^i)-\frac{1}{N}\sum_{i=1}^{N}v(s^i)}{\kappa}}$.
    \STATE Add $(s_t,a_t,s_{t+1})$ to the data buffer.
    \STATE Train $\pi$ and $v$ with data from the buffer using any non-robust RL method.
\ENDFOR
\end{algorithmic}
     \end{algorithm}

\textbf{Convergence.}
The core of our method is the estimation of a worst transition kernel.
In practice, however, we do not have the true robust value function as in Eqn.~(\ref{eqn:delta_pi}).
We start with a randomly initialized value function and expect it to gradually converge to the robust value over training.
Here, we give some theoretical analysis on the convergence of this process.
Let $P_{n}$ denote the estimated worst transition kernel at iteration $n$ and $v_{P_n}^\pi$ denote the (non-robust) value function for the transition kernel $P_n$.
We are interested in the convergence of the following updates:
\begin{equation}
    P_{n+1}(s'|s,a) = \bar{P}(s'|s,a) e^{-\frac{v^\pi_{P_n}(s')-\omega_n}{\kappa_n}}.
\label{eqn:converge}
\end{equation}
$\omega_n$ and $\kappa_n$ are associated with the worst-case transition kernel when the target function is $v^\pi_{P_n}$.
For clarity, we omit their subscript $sa$ (even though they depend on $\beta_{sa}$).
The following theorem shows that the value converges to the robust value and the estimated kernel converges to a worst kernel $P^\pi_\Pc$.
\begin{restatable}{theorem}{kernelconv}
\label{thm:kernelconv}
For the updating process in Eqn.~(\ref{eqn:converge}), we have
\begin{equation*}
    \norm{v^\pi_{P_{n}} - v_{\Pc}^{\pi}}_{\infty} \leq  \gamma^n \norm{v^\pi_{\bar{P}} - v_{\Pc}^{\pi}}_{\infty}. 
\end{equation*}
\end{restatable}
Note that using the robust value function, a worst kernel $P^\pi_\Pc$ can be computed as $P^\pi(\cdot|s,a) = \bar{P}_{\Pc}(\cdot|s,a)e^{-\frac{v^\pi_\Pc-\omega_{sa}}{\kappa_{sa}}}$ as in Theorem \ref{thm:kl}.
This worst kernel (or the samples from it) can be used with any non-robust RL method for policy improvement as described in Figure \ref{fig:method}.

\subsection{A Cliff Walking Example}

\begin{wrapfigure}{r}{0.5\textwidth}
\centering
\includegraphics[width=.9\linewidth]{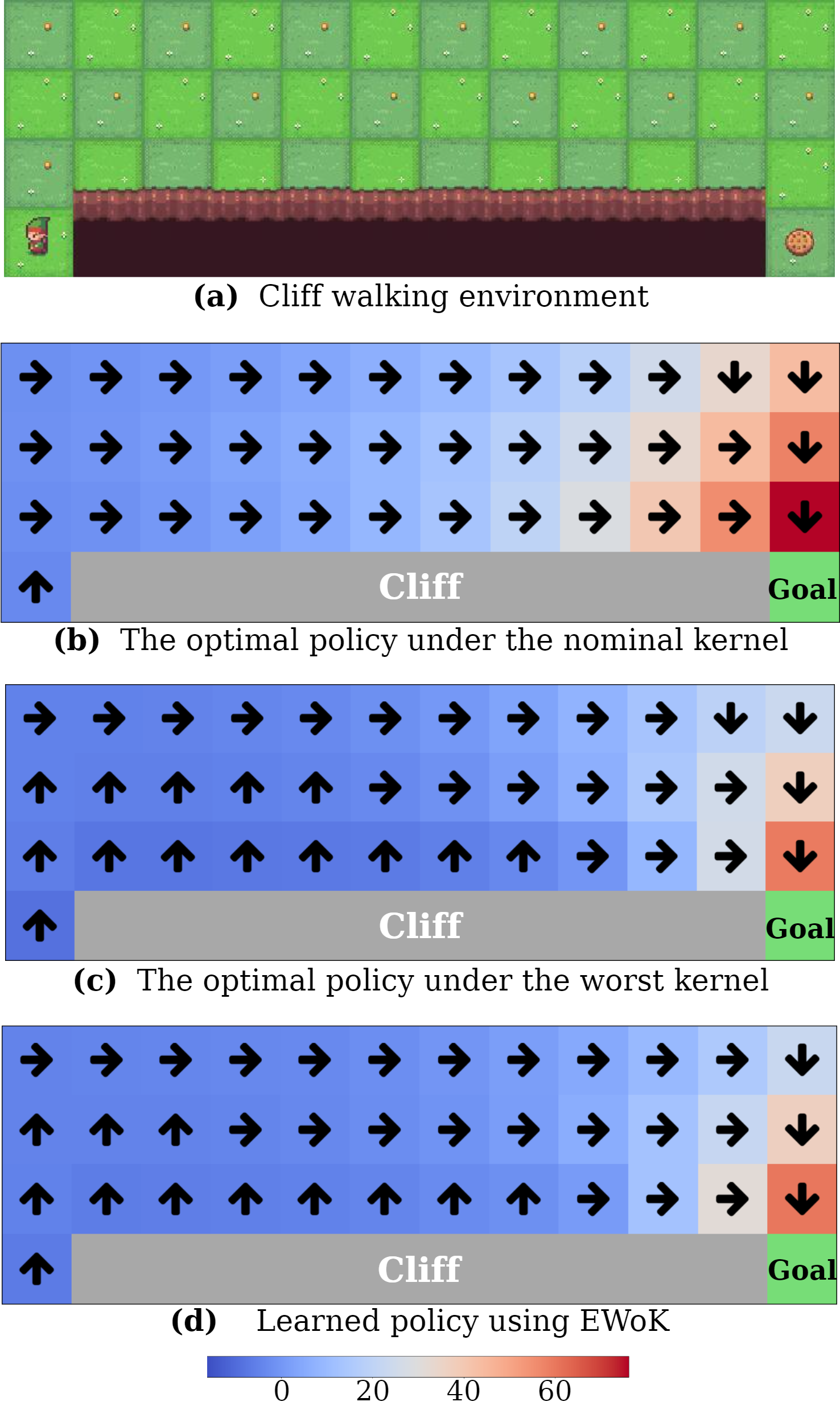}
\vspace{-10pt}
\caption{
Cliff-Walking environment and experiment results. In the bottom 3 plots, the color indicates the learned value and the arrows indicate the actions under the policy.}
\label{fig:cliff}
\end{wrapfigure}


To check if our algorithm can reliably learn the robust value function, we test it on a toy task based on OpenAI's Cliff Walking environment~\citep{brockman2016openai}. 
As shown in Figure~\ref{fig:cliff}(a), the agent must reach the goal state as quickly as possible by moving in 4 cardinal directions on a grid.
If the agent falls off a cliff then it will suffer a penalty and be teleported to the initial state.
The nominal transition kernel is modified to incorporate stochasticity; specifically, with a small probability the agent may go to other adjacent states instead of the state indicated by its action.
The uncertainty is implemented by varying such probabilities within some range.
Please refer to Appendix~\ref{app:env} for details.

Since this environment is tabular, we can obtain the ground-truth optimal robust value and policy by solving an optimization problem (see Appendix~\ref{app:worst-env}).
Then, for both the nominal and the worst transition kernels, we compute their optimal value functions and policies using value iteration.
The results are plotted in Figure~\ref{fig:cliff}(b) and (c).
We can see that even though there is a small chance that the agent would fall off the cliff, the optimal policy of the nominal kernel still advises the agent to move right.
However, under the worst kernel, the optimal policy tends to avoid walking adjacent to the cliff.

Next, we apply EWoK on top of Q-learning~\citep{watkins1992q} to learn the optimal robust value and policy, only using samples from the nominal kernel.
As Figure~\ref{fig:cliff}(d) shows, EWoK learns a policy closely resembling the optimal robust one, advising the agent to stay away from the cliff.
This preliminary experiment acts as a proof-of-concept, demonstrating the efficacy of the proposed method.
In the subsequent section, we will conduct a more thorough evaluation of EWoK in environments of greater complexity.

\section{Experiments}

\begin{figure*}[h]
\centering
\includegraphics[width=0.6\linewidth]{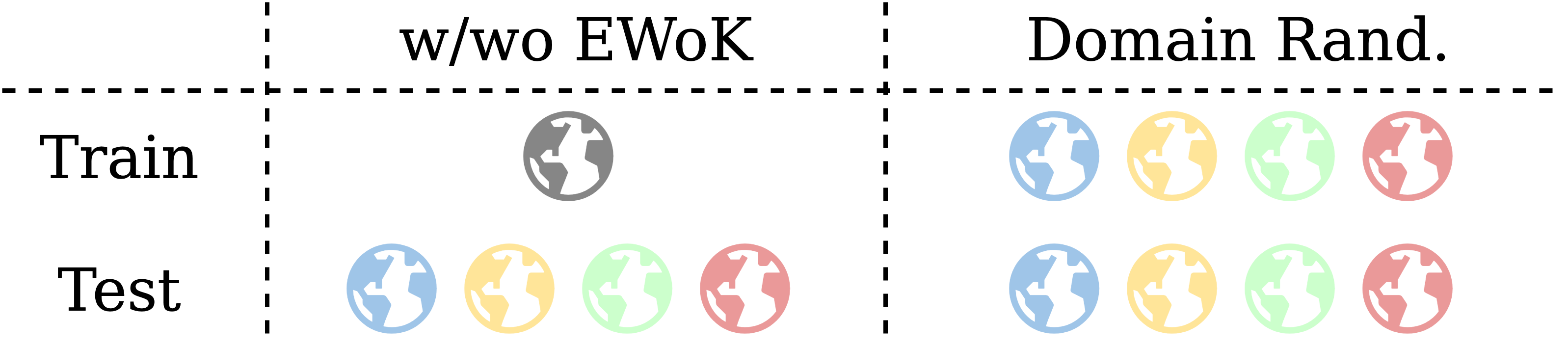}
\caption{An illustration of the experimental setting.
Grey earth denotes the unperturbed (nominal) environment while colored earths denote perturbed environments.}
\label{fig:exp_setup}
\end{figure*}


\subsection{Setting}
\label{sec:setting}

\begin{figure*}[t]
\centering
\includegraphics[width=1.\linewidth]{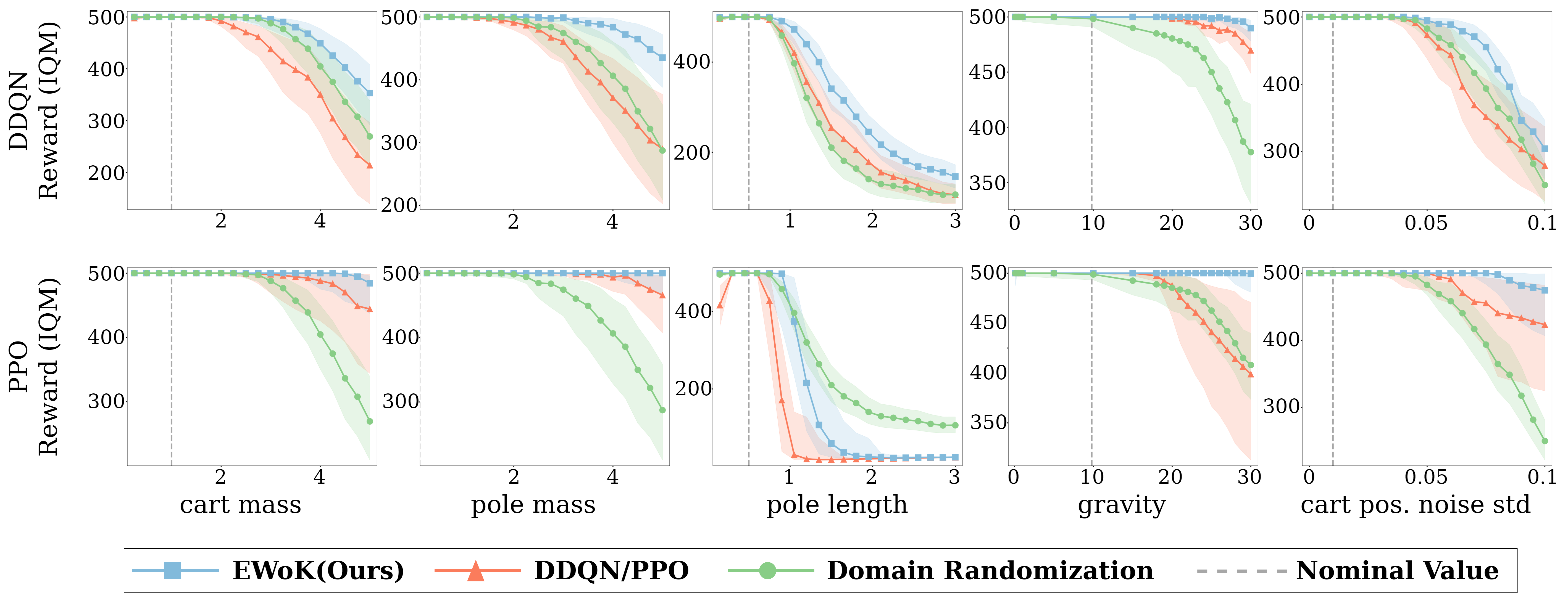}
\caption{Evaluation results on Cartpole with noise and environment parameters perturbations for both DDQN and PPO algorithms.}
\label{fig:cartpole}
\end{figure*}

To evaluate the effectiveness of our method in learning robust policies, we conduct experiments that train the agent online under nominal dynamics and test its performance under perturbed dynamics.
We consider two high-dimensional domains including both discrete and continuous control tasks, to demonstrate that our algorithm can be ``plugged and played'' with any RL method.
Specifically, we experiment on Cartpole - a classic control environment from OpenAI's Gym~\citep{brockman2016openai}
and 3 continuous control tasks (Walker-run, Walker-stand, Walker-walk) from DeepMind Control Suite~\citep{tunyasuvunakool2020}.
For baseline RL algorithms, we use Double DQN~\citep{ddqn} and PPO \citep{schulman2017proximal} for Cartpole, and SAC~\citep{haarnoja2018soft} and TD3~\citep{fujimoto2018addressing} for continuous control environments.
It is worth mentioning that these realistic perturbations do not adhere to the KL uncertainty assumptions.
Experimenting with realistic uncertainties (\eg, coupled or non-KL-based) would demonstrate the general applicability of EWoK, beyond the scope of its theoretical motivation.

As existing methods in RMDPs literature do not scale well (see discussions in Section~\ref{sec:related}), we can not clearly compare ``apples-to-apples''. Therefore, we consider another commonly used robust RL approach as a reference: domain randomization~\citep{tobin2017domain}, and conduct the same set of experiments.
Domain randomization trains the agent under diverse scenarios by perturbing the parameter of interest during training, such that the trained agent can be robust to similar perturbations during testing. It is worth noting that domain randomization has an edge on our method, since it has access to different perturbed parameters during training, while our method is oblivious to those parameters. Figure~\ref{fig:exp_setup} illustrates the differences.

To obtain stable results, we run each experiment with multiple random seeds, and report the interquartile mean (IQM) and 95\% stratified bootstrap confidence intervals (CIs) as recommended by~\citep{agarwal2021deep}.
More details about environments, implementations, training, and evaluation can be found in Appendix~\ref{app:exp-details}.

\subsection{Noise perturbation}
\label{subsection:added_noise}

\begin{figure*}[t]
\centering
\includegraphics[width=0.97\linewidth]{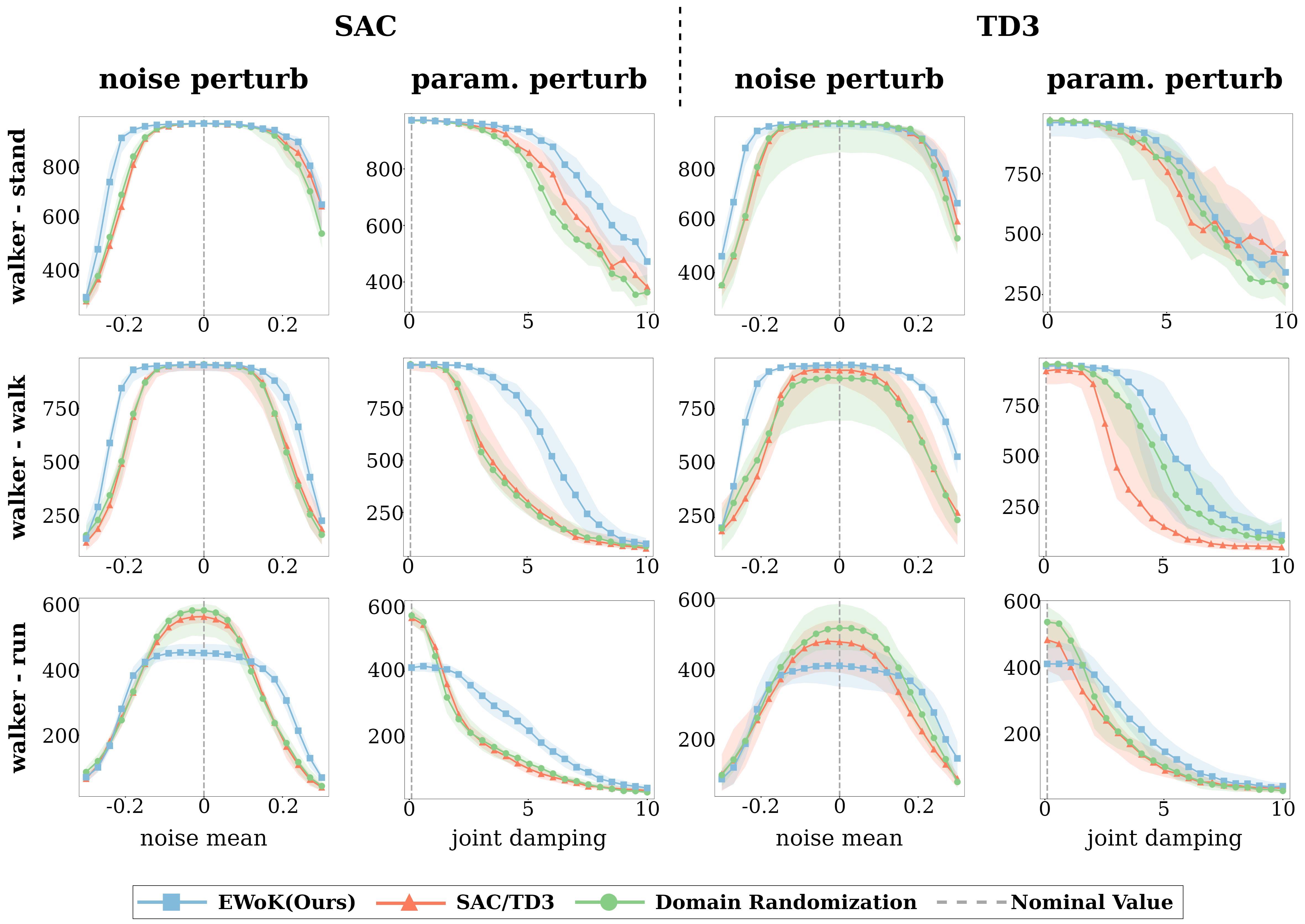}
\caption{Evaluation results on DeepMind Control environments with noise and environment parameter (joint damping) perturbations for both SAC and TD3 algorithms.}
\label{fig:dm_control}
\end{figure*}

In this subsection, we evaluate our method in scenarios where perturbations on the transition dynamics are implemented as noise perturbations.
Specifically, we consider stochastic nominal kernels in which the stochasticity is controlled by some (observation or action) noise.
The agent is trained under a fixed noise (\ie, the nominal kernel) and tested with varying noises (\ie, perturbed kernels).

On Cartpole, we implement the stochasticity by adding Gaussian noise to the state after applying the original deterministic dynamics of the environment, \ie, $\tilde{s}_{t+1} = s_{t+1} + \epsilon$ where $\epsilon \sim \mathcal{N}(0,\sigma)$.
Then, $\tilde{s}_{t+1}$ is considered as the next state output from the stochastic nominal kernel.
The noise scale $\sigma$ is fixed during training and varies during testing.
The agent's test performance across different perturbed values is depicted in the rightmost plot in Figure~\ref{fig:cartpole}. When the noise scale deviates from the nominal value, EWoK achieves better performance than the baseline non-robust RL algorithm and the domain randomization mechanism.

Next, we evaluate our method on continuous control tasks in the DeepMind Control Suite.
The stochasticity is implemented by adding Gaussian noise to the action since directly adding noise to the state might lead to an invalid physical state.
During testing, we perturb the mean of the Gaussian noise.
Figure~\ref{fig:dm_control} shows the agent's performance across different perturbed values.
We can see that EWoK suffers from less performance degradation as the noise mean deviates from zero (the nominal value), clearly outperforming the baseline non-robust RL algorithm.
In the walker-run task, EWoK achieves lower reward under the nominal dynamic but performs better under perturbed ones, which indicates a trade-off between the performance under the nominal kernel and robustness under perturbations.

\subsection{Perturbing environment parameters}
\label{subsection:env_params}

To further validate the effectiveness of our method, we consider a more realistic scenario where some physical/logical parameters in the environment (\eg, pole length in Cartpole) are perturbed.
Similarly, the agent is trained with a fixed parameter, and tested under perturbed parameters.

For Cartpole, we perturb cart mass, pole mass, pole length, and gravity.
Figure~\ref{fig:cartpole} summarizes the testing results of the agents trained under the nominal dynamics.
Again, EWoK achieves better performance than the baseline non-robust RL algorithm and the domain randomization technique when the environment parameters deviate from the nominal value.
It is noteworthy that for every environmental parameter (\eg, pole mass), we train a separate domain randomization agent that undergoes perturbations only on that parameter during training.
In comparison, EWoK is trained only once and then tested on different parameters.

For DeepMind control tasks, we implement the perturbations on the environment parameters using the Real-World Reinforcement Learning Suite~\citep{dulac2020empirical}. Specifically, we perturb joint damping, thigh length, and torso length. For all of the results, please refer to Appendix~\ref{app:all-results}.
As shown in Figure~\ref{fig:dm_control}, EWoK generally works better than the baseline under model mismatch, improving the robustness of the learned policy.
Similar to our observations in the previous section, the walker-run task emphasizes the inherent trade-off of solving RMDPs: optimizing the worst-case scenario can lead to suboptimal performance under the nominal model.

\subsection{Ablation studies}
\label{subsection:ablation}

In this subsection, we conduct ablation experiments to investigate the effects of our hyperparameters on the performance. Recall that $\kappa$ controls the skewness of the distribution for resampling, while $N$ controls the number of next-state samples.
Intuitively, when we decrease $\kappa$, we are essentially considering a higher level of robustness.
If $\kappa$ is very small, then with a high probability the environment dynamic will transit to the ``worst'' state (\ie, one with the lowest value).
In addition, by increasing $N$ we effectively improve our empirical estimation of the nominal kernel's next state distribution, which should improve the worst kernel estimation.

We experiment on the DeepMind Control tasks under noise perturbation setting, using different $\kappa$ and $N$ when we train the agent.
For clarity, we plot the performance difference between our method and the baseline instead of the absolute performance and defer the original results with CIs (shaded areas) to Appendix~\ref{app:all-results}.
Figure~\ref{subfig:ablation-kappa} shows the results of changing the values of $\kappa$.
In the walker domain, decreasing $\kappa$ makes our algorithm perform better in perturbed environments, which aligns with our expectations.
Figure~\ref{subfig:ablation-N} shows the results of changing the values of $N$.
We can see that a small sample size will result in limited performance gain compared to the baseline, but increasing the sample size may not bring monotonic improvements.
In addition, more samples will incur longer simulation time in each environment step.
In our experiments, we observed minimal impact on walk-clock time, due to fast simulation. In practical scenarios where sampling next states could be slow, however, we need to take this factor into consideration. Nonetheless, we believe should not significantly increase simulation time to a prohibitive extent.

It is worth mentioning that the influence of $\kappa$ depends on the environment.
Decreasing it too much can lead to too conservative policies and may not always work well.
In addition, we observe the robustness-performance trade-off in the walker-run task once again.
While large $\kappa$ achieves high performance under the nominal kernel, it significantly under-performs when the kernel is perturbed.

\begin{figure*}[t]
\centering
\subfloat[{different $\kappa$ values}]{
  \includegraphics[width=0.45\linewidth]{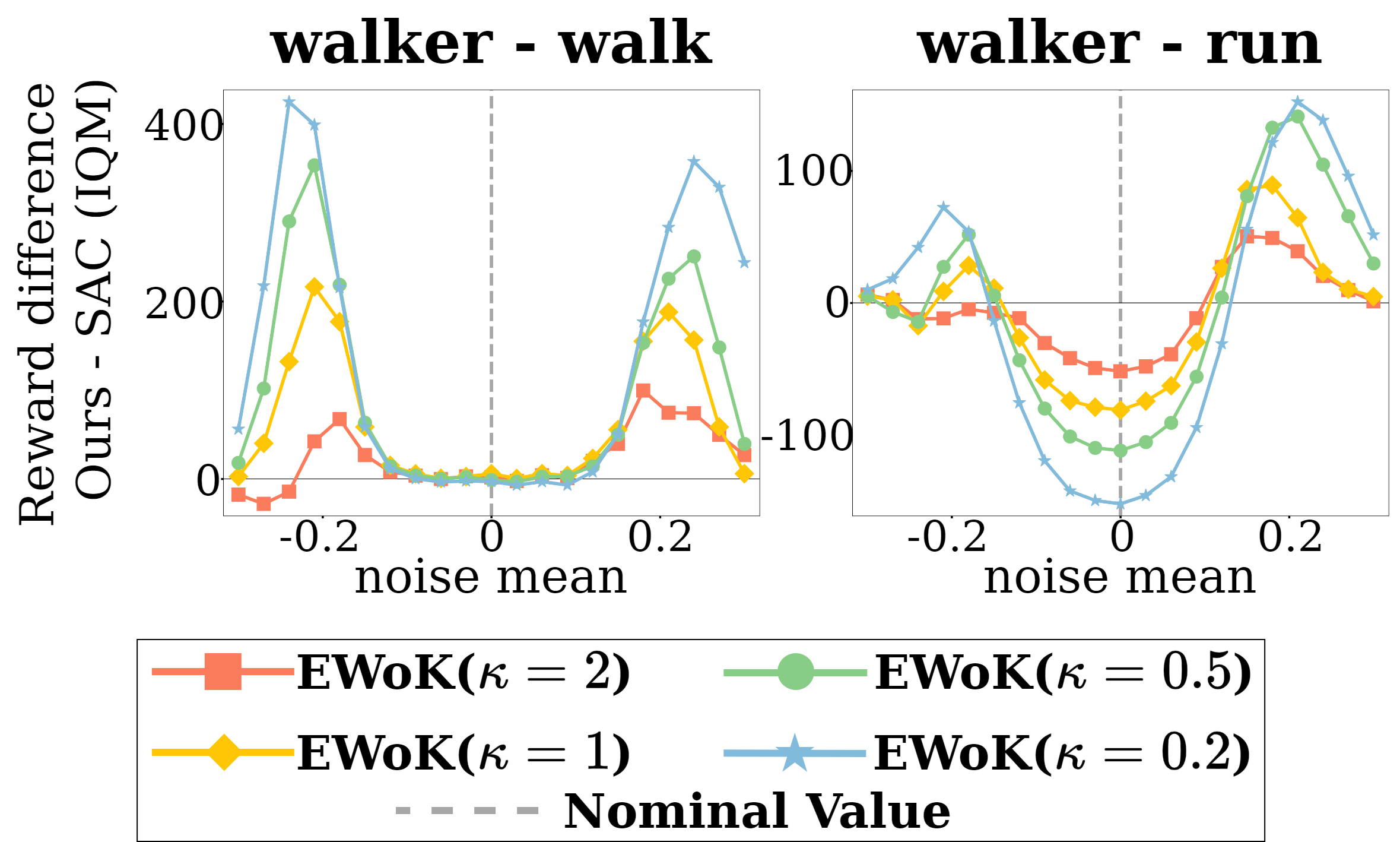}
  \label{subfig:ablation-kappa}}
\hfill
\subfloat[different $N$ values \label{subfig:ablation-N}]{
\includegraphics[width=0.45\linewidth]{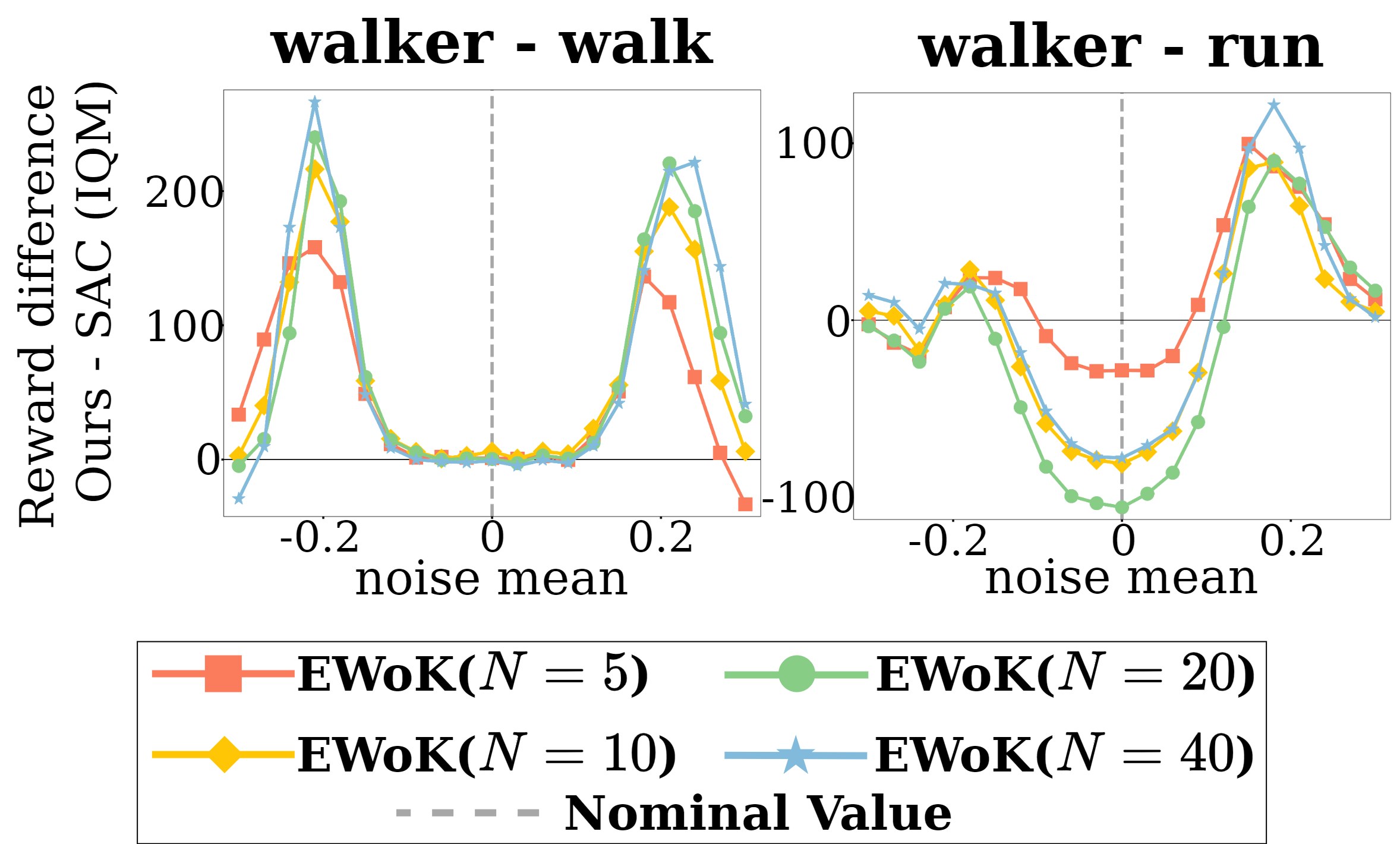}}
\caption{Evaluation results on DeepMind Control tasks with noise perturbations for different $\kappa / N$. The y-axis represents the performance difference between our method and the baseline}
\end{figure*}


\section{Related works}
\label{sec:related}


Early works in RMDPs lay the theoretical foundations for solving RMDPs with robust dynamic programming~\citep{wiesemann2013robust,iyengar2005robust,nilim2005robust,Kaufman2013RobustMP,bagnell2001solving}.
Recent works attempt to reduce the time complexity for certain uncertainty sets, such as $L_1$ uncertainty~\citep{ho2018fast,ho2021partial} and KL uncertainty~\citep{grand2021scalable}.
However, they require full knowledge of the nominal model.
 

One line of work aims to design methods that can be applied in the online robust RL setting where we do not have full knowledge about the transition model.
\citet{derman2021twice} define new regularized robust Bellman operators that suggest a possible online sample-based method.
However, the contraction of the Bellman operators implicitly assumes that the state space can not be very large.
On regularizing the learning process, \citet{kumar2022efficient, PgLprmdp} introduce Q-learning and policy gradient methods for $L_p$ uncertainty sets, but do not experimentally evaluate their methods with experiments.
Another type of uncertainty is the R-contamination, for which previous works have derived a robust Q-learning algorithm~\citep{Rcontamination} and a regularized policy gradient algorithm~\citep{wang2022policy}.
R-contamination uncertainty assumes that the adversary can take the agent to any state, which is too conservative in practice. 
In addition, all of those methods are tied to a particular type of RL algorithm.
Our work, however, aims to tackle the problem from a different perspective by approximating a worst kernel and can adopt any non-robust RL algorithm to learn an optimal robust policy.
A recent work~\citep{wang2023policy} has shown that the worst kernel can be computed using gradient descent, but their method takes more iterations to converge.


Outside of RMDP literature, perturbing the training environment was previously discussed in unsupervised environment design~\citep{dennis2020emergent,jiang2021replay}, domain randomization~\citep{peng2018sim,tobin2017domain}, robust adversarial RL~\citep{pinto2017robust,rigter2022rambo} and risk aversion~\citep{greenberg2022efficient,pan2019risk}.
However, their focus on robustness differs from our perspective.
They either assume access to environment parameters or aim for better generalization.
Our method is theoretically driven as a solution to RMDPs.


Our work is also closely related to~\citep{PgLprmdp}, which characterizes the worst kernel for $L_p$ uncertainty set.
Different from their work, we propose to approximately simulate this worst kernel, opening a new paradigm for learning robust policies in RMDPs.
The work of~\citep{zhou2023natural} ran parallel to ours. They employ a sample method to establish a new manageable uncertainty set, enabling the computation of a robust Bellman operator through both value-based and policy-based methods. In contrast, our approach involves estimating the worst kernel through sampling from the nominal one to address the problem.


\section{Conclusions and discussion}
\label{sec:conclusion}

In this paper we introduce an approach that tackles the RMDP problem from a new perspective, by approximately simulating a worst transition kernel while leaving the RL part untouched.
The highlight of our method is that it can be applied on top of any existing non-robust deep RL algorithms to learn robust policies, exhibiting attractive scalability to high-dimensional domains.
We believe this new perspective will offer some insights for future works on RMDPs.

One limitation of our work is that we require the ability to sample next states from the transition model multiple times.
In future work, we will study how to combine our method with a learned transition model (\ie, world models ~\citep{DBLP:conf/iclr/HafnerLB020, DBLP:conf/nips/HaS18}) where the challenge of next-state sampling is mitigated. We also believe that using EWoK for model-based or offline setups might lower the effect of compounding error issue~\citep{asadi2018lipschitz}.



\bibliography{biblio}

\begin{thebibliography}{48}
\providecommand{\natexlab}[1]{#1}
\providecommand{\url}[1]{\texttt{#1}}
\expandafter\ifx\csname urlstyle\endcsname\relax
  \providecommand{\doi}[1]{doi: #1}\else
  \providecommand{\doi}{doi: \begingroup \urlstyle{rm}\Url}\fi

\bibitem[Agarwal et~al.(2021)Agarwal, Schwarzer, Castro, Courville, and Bellemare]{agarwal2021deep}
Agarwal, R., Schwarzer, M., Castro, P.~S., Courville, A., and Bellemare, M.~G.
\newblock Deep reinforcement learning at the edge of the statistical precipice.
\newblock \emph{Advances in Neural Information Processing Systems}, 2021.

\bibitem[Asadi et~al.(2018)Asadi, Misra, and Littman]{asadi2018lipschitz}
Asadi, K., Misra, D., and Littman, M.
\newblock Lipschitz continuity in model-based reinforcement learning.
\newblock In \emph{International Conference on Machine Learning}, pp.\  264--273. PMLR, 2018.

\bibitem[Bagnell et~al.(2001)Bagnell, Ng, and Schneider]{bagnell2001solving}
Bagnell, J.~A., Ng, A.~Y., and Schneider, J.~G.
\newblock {Solving Uncertain Markov Decision Processes}.
\newblock \emph{Technical Report}, 1 2001.
\newblock \doi{10.1184/R1/6560927.v1}.
\newblock URL \url{https://kilthub.cmu.edu/articles/journal_contribution/Solving_Uncertain_Markov_Decision_Processes/6560927}.

\bibitem[Behzadian et~al.(2021)Behzadian, Petrik, and Ho]{behzadian2021fast}
Behzadian, B., Petrik, M., and Ho, C.~P.
\newblock Fast algorithms for $l_{\infty}$-constrained s-rectangular robust {MDP}s.
\newblock \emph{Advances in Neural Information Processing Systems}, 34:\penalty0 25982--25992, 2021.

\bibitem[Brockman et~al.(2016)Brockman, Cheung, Pettersson, Schneider, Schulman, Tang, and Zaremba]{brockman2016openai}
Brockman, G., Cheung, V., Pettersson, L., Schneider, J., Schulman, J., Tang, J., and Zaremba, W.
\newblock Open{AI} {G}ym.
\newblock \emph{ArXiv preprint}, abs/1606.01540, 2016.
\newblock URL \url{https://arxiv.org/abs/1606.01540}.

\bibitem[Dennis et~al.(2020)Dennis, Jaques, Vinitsky, Bayen, Russell, Critch, and Levine]{dennis2020emergent}
Dennis, M., Jaques, N., Vinitsky, E., Bayen, A., Russell, S., Critch, A., and Levine, S.
\newblock Emergent complexity and zero-shot transfer via unsupervised environment design.
\newblock \emph{Advances in neural information processing systems}, 33:\penalty0 13049--13061, 2020.

\bibitem[Derman et~al.(2021)Derman, Geist, and Mannor]{derman2021twice}
Derman, E., Geist, M., and Mannor, S.
\newblock Twice regularized mdps and the equivalence between robustness and regularization.
\newblock \emph{Advances in Neural Information Processing Systems}, 34:\penalty0 22274--22287, 2021.

\bibitem[Dulac{-}Arnold et~al.(2020)Dulac{-}Arnold, Levine, Mankowitz, Li, Paduraru, Gowal, and Hester]{dulac2020empirical}
Dulac{-}Arnold, G., Levine, N., Mankowitz, D.~J., Li, J., Paduraru, C., Gowal, S., and Hester, T.
\newblock An empirical investigation of the challenges of real-world reinforcement learning.
\newblock \emph{CoRR}, abs/2003.11881, 2020.
\newblock URL \url{https://arxiv.org/abs/2003.11881}.

\bibitem[Fujimoto et~al.(2018)Fujimoto, Hoof, and Meger]{fujimoto2018addressing}
Fujimoto, S., Hoof, H., and Meger, D.
\newblock Addressing function approximation error in actor-critic methods.
\newblock In \emph{International conference on machine learning}, pp.\  1587--1596. PMLR, 2018.

\bibitem[Grand{-}Cl{\'{e}}ment \& Kroer(2021)Grand{-}Cl{\'{e}}ment and Kroer]{grand2021scalable}
Grand{-}Cl{\'{e}}ment, J. and Kroer, C.
\newblock Scalable first-order methods for robust mdps.
\newblock In \emph{Thirty-Fifth {AAAI} Conference on Artificial Intelligence, {AAAI} 2021, Thirty-Third Conference on Innovative Applications of Artificial Intelligence, {IAAI} 2021, The Eleventh Symposium on Educational Advances in Artificial Intelligence, {EAAI} 2021, Virtual Event, February 2-9, 2021}, pp.\  12086--12094. {AAAI} Press, 2021.
\newblock URL \url{https://ojs.aaai.org/index.php/AAAI/article/view/17435}.

\bibitem[Greenberg et~al.(2022)Greenberg, Chow, Ghavamzadeh, and Mannor]{greenberg2022efficient}
Greenberg, I., Chow, Y., Ghavamzadeh, M., and Mannor, S.
\newblock Efficient risk-averse reinforcement learning.
\newblock \emph{arXiv preprint arXiv:2205.05138}, 2022.

\bibitem[Ha \& Schmidhuber(2018)Ha and Schmidhuber]{DBLP:conf/nips/HaS18}
Ha, D. and Schmidhuber, J.
\newblock Recurrent world models facilitate policy evolution.
\newblock In Bengio, S., Wallach, H.~M., Larochelle, H., Grauman, K., Cesa{-}Bianchi, N., and Garnett, R. (eds.), \emph{Advances in Neural Information Processing Systems 31: Annual Conference on Neural Information Processing Systems 2018, NeurIPS 2018, December 3-8, 2018, Montr{\'{e}}al, Canada}, pp.\  2455--2467, 2018.
\newblock URL \url{https://proceedings.neurips.cc/paper/2018/hash/2de5d16682c3c35007e4e92982f1a2ba-Abstract.html}.

\bibitem[Haarnoja et~al.(2018)Haarnoja, Zhou, Abbeel, and Levine]{haarnoja2018soft}
Haarnoja, T., Zhou, A., Abbeel, P., and Levine, S.
\newblock Soft actor-critic: Off-policy maximum entropy deep reinforcement learning with a stochastic actor.
\newblock In Dy, J.~G. and Krause, A. (eds.), \emph{Proceedings of the 35th International Conference on Machine Learning, {ICML} 2018, Stockholmsm{\"{a}}ssan, Stockholm, Sweden, July 10-15, 2018}, volume~80 of \emph{Proceedings of Machine Learning Research}, pp.\  1856--1865. {PMLR}, 2018.
\newblock URL \url{http://proceedings.mlr.press/v80/haarnoja18b.html}.

\bibitem[Hafner et~al.(2020)Hafner, Lillicrap, Ba, and Norouzi]{DBLP:conf/iclr/HafnerLB020}
Hafner, D., Lillicrap, T.~P., Ba, J., and Norouzi, M.
\newblock Dream to control: Learning behaviors by latent imagination.
\newblock In \emph{8th International Conference on Learning Representations, {ICLR} 2020, Addis Ababa, Ethiopia, April 26-30, 2020}. OpenReview.net, 2020.
\newblock URL \url{https://openreview.net/forum?id=S1lOTC4tDS}.

\bibitem[Ho et~al.(2018)Ho, Petrik, and Wiesemann]{ho2018fast}
Ho, C.~P., Petrik, M., and Wiesemann, W.
\newblock Fast bellman updates for robust mdps.
\newblock In Dy, J.~G. and Krause, A. (eds.), \emph{Proceedings of the 35th International Conference on Machine Learning, {ICML} 2018, Stockholmsm{\"{a}}ssan, Stockholm, Sweden, July 10-15, 2018}, volume~80 of \emph{Proceedings of Machine Learning Research}, pp.\  1984--1993. {PMLR}, 2018.
\newblock URL \url{http://proceedings.mlr.press/v80/ho18a.html}.

\bibitem[Ho et~al.(2021)Ho, Petrik, and Wiesemann]{ho2021partial}
Ho, C.~P., Petrik, M., and Wiesemann, W.
\newblock Partial policy iteration for l1-robust markov decision processes.
\newblock \emph{The Journal of Machine Learning Research}, 22\penalty0 (1):\penalty0 12612--12657, 2021.

\bibitem[Iyengar(2005)]{iyengar2005robust}
Iyengar, G.~N.
\newblock Robust dynamic programming.
\newblock \emph{Mathematics of Operations Research}, 30\penalty0 (2):\penalty0 257--280, 2005.

\bibitem[Jiang et~al.(2021)Jiang, Dennis, Parker-Holder, Foerster, Grefenstette, and Rockt{\"a}schel]{jiang2021replay}
Jiang, M., Dennis, M., Parker-Holder, J., Foerster, J., Grefenstette, E., and Rockt{\"a}schel, T.
\newblock Replay-guided adversarial environment design.
\newblock \emph{Advances in Neural Information Processing Systems}, 34:\penalty0 1884--1897, 2021.

\bibitem[Kaufman \& Schaefer(2013)Kaufman and Schaefer]{Kaufman2013RobustMP}
Kaufman, D.~L. and Schaefer, A.~J.
\newblock Robust modified policy iteration.
\newblock \emph{INFORMS J. Comput.}, 25:\penalty0 396--410, 2013.

\bibitem[Kingma \& Ba(2015)Kingma and Ba]{kingma14adam}
Kingma, D.~P. and Ba, J.
\newblock Adam: {A} method for stochastic optimization.
\newblock In Bengio, Y. and LeCun, Y. (eds.), \emph{3rd International Conference on Learning Representations, {ICLR} 2015, San Diego, CA, USA, May 7-9, 2015, Conference Track Proceedings}, 2015.
\newblock URL \url{http://arxiv.org/abs/1412.6980}.

\bibitem[Kumar et~al.(2022)Kumar, Levy, Wang, and Mannor]{kumar2022efficient}
Kumar, N., Levy, K., Wang, K., and Mannor, S.
\newblock Efficient policy iteration for robust {M}arkov decision processes via regularization.
\newblock \emph{ArXiv preprint}, abs/2205.14327, 2022.
\newblock URL \url{https://arxiv.org/abs/2205.14327}.

\bibitem[Kumar et~al.(2023)Kumar, Derman, Geist, Levy, and Mannor]{PgLprmdp}
Kumar, N., Derman, E., Geist, M., Levy, K., and Mannor, S.
\newblock Policy gradient for s-rectangular robust markov decision processes.
\newblock \emph{ArXiv preprint}, abs/2301.13589, 2023.
\newblock URL \url{https://arxiv.org/abs/2301.13589}.

\bibitem[Mannor et~al.(2007)Mannor, Simester, Sun, and Tsitsiklis]{mannor2007bias}
Mannor, S., Simester, D., Sun, P., and Tsitsiklis, J.~N.
\newblock Bias and variance approximation in value function estimates.
\newblock \emph{Management Science}, 53\penalty0 (2):\penalty0 308--322, 2007.

\bibitem[Nilim \& El~Ghaoui(2005)Nilim and El~Ghaoui]{nilim2005robust}
Nilim, A. and El~Ghaoui, L.
\newblock Robust control of {M}arkov decision processes with uncertain transition matrices.
\newblock \emph{Operations Research}, 53\penalty0 (5):\penalty0 780--798, 2005.

\bibitem[Pan et~al.(2019)Pan, Seita, Gao, and Canny]{pan2019risk}
Pan, X., Seita, D., Gao, Y., and Canny, J.
\newblock Risk averse robust adversarial reinforcement learning.
\newblock In \emph{2019 International Conference on Robotics and Automation (ICRA)}, pp.\  8522--8528. IEEE, 2019.

\bibitem[Panaganti \& Kalathil(2022)Panaganti and Kalathil]{panaganti2022sample}
Panaganti, K. and Kalathil, D.
\newblock Sample complexity of robust reinforcement learning with a generative model.
\newblock In \emph{International Conference on Artificial Intelligence and Statistics}, pp.\  9582--9602. PMLR, 2022.

\bibitem[Peng et~al.(2018)Peng, Andrychowicz, Zaremba, and Abbeel]{peng2018sim}
Peng, X.~B., Andrychowicz, M., Zaremba, W., and Abbeel, P.
\newblock Sim-to-real transfer of robotic control with dynamics randomization.
\newblock In \emph{2018 IEEE international conference on robotics and automation (ICRA)}, pp.\  3803--3810. IEEE, 2018.

\bibitem[Pinto et~al.(2017)Pinto, Davidson, Sukthankar, and Gupta]{pinto2017robust}
Pinto, L., Davidson, J., Sukthankar, R., and Gupta, A.
\newblock Robust adversarial reinforcement learning.
\newblock In Precup, D. and Teh, Y.~W. (eds.), \emph{Proceedings of the 34th International Conference on Machine Learning, {ICML} 2017, Sydney, NSW, Australia, 6-11 August 2017}, volume~70 of \emph{Proceedings of Machine Learning Research}, pp.\  2817--2826. {PMLR}, 2017.
\newblock URL \url{http://proceedings.mlr.press/v70/pinto17a.html}.

\bibitem[Puterman(1994)]{Puterman1994MarkovDP}
Puterman, M.~L.
\newblock Markov decision processes: Discrete stochastic dynamic programming.
\newblock In \emph{Wiley Series in Probability and Statistics}, 1994.

\bibitem[Raffin(2020)]{rl-zoo3}
Raffin, A.
\newblock Rl baselines3 zoo.
\newblock \url{https://github.com/DLR-RM/rl-baselines3-zoo}, 2020.

\bibitem[Raffin et~al.(2021)Raffin, Hill, Gleave, Kanervisto, Ernestus, and Dormann]{stable-baselines3}
Raffin, A., Hill, A., Gleave, A., Kanervisto, A., Ernestus, M., and Dormann, N.
\newblock Stable-baselines3: Reliable reinforcement learning implementations.
\newblock \emph{Journal of Machine Learning Research}, 22\penalty0 (268):\penalty0 1--8, 2021.
\newblock URL \url{http://jmlr.org/papers/v22/20-1364.html}.

\bibitem[Rigter et~al.(2022)Rigter, Lacerda, and Hawes]{rigter2022rambo}
Rigter, M., Lacerda, B., and Hawes, N.
\newblock Rambo-rl: Robust adversarial model-based offline reinforcement learning.
\newblock \emph{arXiv preprint arXiv:2204.12581}, 2022.

\bibitem[Schulman et~al.(2017)Schulman, Wolski, Dhariwal, Radford, and Klimov]{schulman2017proximal}
Schulman, J., Wolski, F., Dhariwal, P., Radford, A., and Klimov, O.
\newblock Proximal policy optimization algorithms.
\newblock \emph{arXiv preprint arXiv:1707.06347}, 2017.

\bibitem[Shi \& Chi(2022)Shi and Chi]{shi2022distributionally}
Shi, L. and Chi, Y.
\newblock Distributionally robust model-based offline reinforcement learning with near-optimal sample complexity.
\newblock \emph{arXiv preprint arXiv:2208.05767}, 2022.

\bibitem[Sutton \& Barto(2018)Sutton and Barto]{Sutton1998}
Sutton, R.~S. and Barto, A.~G.
\newblock \emph{Reinforcement Learning: An Introduction}.
\newblock The MIT Press, second edition, 2018.
\newblock URL \url{http://incompleteideas.net/book/the-book-2nd.html}.

\bibitem[Tobin et~al.(2017)Tobin, Fong, Ray, Schneider, Zaremba, and Abbeel]{tobin2017domain}
Tobin, J., Fong, R., Ray, A., Schneider, J., Zaremba, W., and Abbeel, P.
\newblock Domain randomization for transferring deep neural networks from simulation to the real world.
\newblock In \emph{2017 IEEE/RSJ international conference on intelligent robots and systems (IROS)}, pp.\  23--30. IEEE, 2017.

\bibitem[Todorov et~al.(2012)Todorov, Erez, and Tassa]{todorov2012mujoco}
Todorov, E., Erez, T., and Tassa, Y.
\newblock Mujoco: A physics engine for model-based control.
\newblock In \emph{2012 IEEE/RSJ International Conference on Intelligent Robots and Systems}, pp.\  5026--5033. IEEE, 2012.
\newblock \doi{10.1109/IROS.2012.6386109}.

\bibitem[Tunyasuvunakool et~al.(2020)Tunyasuvunakool, Muldal, Doron, Liu, Bohez, Merel, Erez, Lillicrap, Heess, and Tassa]{tunyasuvunakool2020}
Tunyasuvunakool, S., Muldal, A., Doron, Y., Liu, S., Bohez, S., Merel, J., Erez, T., Lillicrap, T., Heess, N., and Tassa, Y.
\newblock dm\_control: Software and tasks for continuous control.
\newblock \emph{Software Impacts}, 6:\penalty0 100022, 2020.
\newblock ISSN 2665-9638.
\newblock \doi{https://doi.org/10.1016/j.simpa.2020.100022}.
\newblock URL \url{https://www.sciencedirect.com/science/article/pii/S2665963820300099}.

\bibitem[van Hasselt et~al.(2016)van Hasselt, Guez, and Silver]{ddqn}
van Hasselt, H., Guez, A., and Silver, D.
\newblock Deep reinforcement learning with double q-learning.
\newblock In Schuurmans, D. and Wellman, M.~P. (eds.), \emph{Proceedings of the Thirtieth {AAAI} Conference on Artificial Intelligence, February 12-17, 2016, Phoenix, Arizona, {USA}}, pp.\  2094--2100. {AAAI} Press, 2016.
\newblock URL \url{http://www.aaai.org/ocs/index.php/AAAI/AAAI16/paper/view/12389}.

\bibitem[Wang et~al.(2023)Wang, Ho, and Petrik]{wang2023policy}
Wang, Q., Ho, C.~P., and Petrik, M.
\newblock Policy gradient in robust mdps with global convergence guarantee.
\newblock In Krause, A., Brunskill, E., Cho, K., Engelhardt, B., Sabato, S., and Scarlett, J. (eds.), \emph{International Conference on Machine Learning, {ICML} 2023, 23-29 July 2023, Honolulu, Hawaii, {USA}}, volume 202 of \emph{Proceedings of Machine Learning Research}, pp.\  35763--35797. {PMLR}, 2023.
\newblock URL \url{https://proceedings.mlr.press/v202/wang23i.html}.

\bibitem[Wang \& Zou(2021)Wang and Zou]{Rcontamination}
Wang, Y. and Zou, S.
\newblock Online robust reinforcement learning with model uncertainty.
\newblock \emph{ArXiv preprint}, abs/2109.14523, 2021.
\newblock URL \url{https://arxiv.org/abs/2109.14523}.

\bibitem[Wang \& Zou(2022)Wang and Zou]{wang2022policy}
Wang, Y. and Zou, S.
\newblock Policy gradient method for robust reinforcement learning.
\newblock \emph{International Conference on Machine Learning}, 162:\penalty0 23484--23526, 2022.

\bibitem[Wang et~al.(2022)Wang, Miao, and Zou]{wang2022robust}
Wang, Y., Miao, F., and Zou, S.
\newblock Robust constrained reinforcement learning.
\newblock \emph{ArXiv preprint}, abs/2209.06866, 2022.
\newblock URL \url{https://arxiv.org/abs/2209.06866}.

\bibitem[Watkins \& Dayan(1992)Watkins and Dayan]{watkins1992q}
Watkins, C.~J. and Dayan, P.
\newblock Q-learning.
\newblock \emph{Machine learning}, 8:\penalty0 279--292, 1992.

\bibitem[Wiesemann et~al.(2013)Wiesemann, Kuhn, and Rustem]{wiesemann2013robust}
Wiesemann, W., Kuhn, D., and Rustem, B.
\newblock Robust markov decision processes.
\newblock \emph{Mathematics of Operations Research}, 38\penalty0 (1):\penalty0 153--183, 2013.

\bibitem[Xu et~al.(2023)Xu, Panaganti, and Kalathil]{xu2023improved}
Xu, Z., Panaganti, K., and Kalathil, D.
\newblock Improved sample complexity bounds for distributionally robust reinforcement learning.
\newblock In \emph{International Conference on Artificial Intelligence and Statistics}, pp.\  9728--9754. PMLR, 2023.

\bibitem[Yarats \& Kostrikov(2020)Yarats and Kostrikov]{pytorch_sac}
Yarats, D. and Kostrikov, I.
\newblock Soft actor-critic (sac) implementation in pytorch.
\newblock \url{https://github.com/denisyarats/pytorch_sac}, 2020.

\bibitem[Zhou et~al.(2023)Zhou, Liu, Cheng, Kalathil, Kumar, and Tian]{zhou2023natural}
Zhou, R., Liu, T., Cheng, M., Kalathil, D., Kumar, P., and Tian, C.
\newblock Natural actor-critic for robust reinforcement learning with function approximation.
\newblock \emph{arXiv preprint arXiv:2307.08875}, 2023.

\end{thebibliography}
\bibliographystyle{bib_style}


\newpage
\appendix
\onecolumn

\section{Proof}
\label{app:proof}

\subsection{Proof of Theorem~\ref{thm:kl}}

Recall that the worst values are defined as 
\[ P^\pi_\Pc \in \argmin_{P\in \Pc}J^\pi_{P}\]
for any general uncertainty set $\Pc$. Further, for $sa$-rectangular uncertainty set $\Pc = \times_{s\in\St, a\in\A}\Pc_{sa}$, the robust value function exists, that is, the following is well defined~\citep{nilim2005robust,iyengar2005robust}
\[v^\pi_{\Pc} = \min_{P\in \Pc}v^\pi_{P}.\]
This implies,
\[v^\pi_{\Pc}  = \Bigm(I-\gamma (P^\pi_{\Pc})^\pi\Bigm)^{-1} R^\pi\]
is the fixed point of robust Bellman operator $\mathcal{T}^\pi_{\Pc}$\citep{nilim2005robust,iyengar2005robust}, defined as
\[\mathcal{T}^\pi_{\Pc}v := \min_{P\in \Pc}\mathcal{T}^\pi_{P}v.\]

\begin{proposition}\label{app:wortkrnBellmn} The worst values can be computed from the robust value function. That is
    \[\argmin_{P\in \Pc}\mathcal{T}^\pi_{P}v^\pi_{\Pc} \subseteq \argmin_{P\in \Pc}v^\pi_{P} \subseteq \argmin_{P\in \Pc}J^\pi_{P} .\]
\end{proposition}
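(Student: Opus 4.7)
The plan is to interpret the middle expression $\min_{P\in\Pc} v^\pi_P$ as the set $\{P\in\Pc : v^\pi_P = v^\pi_{\Pc}\}$ of transition kernels that attain the robust value pointwise (this reading is the only one consistent with having set inclusions on both sides). With that reading, the proposition splits cleanly into two inclusions, each of which has a short, essentially standard argument.

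For the first inclusion $\argmin_{P\in\Pc}\mathcal{T}^\pi_P v^\pi_{\Pc} \subseteq \{P : v^\pi_P = v^\pi_{\Pc}\}$, I would take any $P^\star$ in the left-hand argmin and chain two identities. By definition of the robust Bellman operator, $\mathcal{T}^\pi_{P^\star} v^\pi_{\Pc} = \mathcal{T}^\pi_{\Pc} v^\pi_{\Pc}$, and since $v^\pi_{\Pc}$ is the fixed point of $\mathcal{T}^\pi_{\Pc}$ (as stated in the excerpt), this equals $v^\pi_{\Pc}$ itself. So $v^\pi_{\Pc}$ is a fixed point of the standard (non-robust) Bellman operator $\mathcal{T}^\pi_{P^\star}$. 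Because $\mathcal{T}^\pi_{P^\star}$ is a $\gamma$-contraction on $\R^{|\St|}$ with unique fixed point $v^\pi_{P^\star}$, we conclude $v^\pi_{P^\star} = v^\pi_{\Pc}$. A minor subtlety, worth flagging but not really an obstacle, is that the argmin on the left is a pointwise (vector-valued) minimum; under \texttt{sa}-rectangularity it decouples state-by-state, so a single $P^\star \in \Pc$ achieving it exists and the above manipulation is legitimate.

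For the second inclusion $\{P : v^\pi_P = v^\pi_{\Pc}\} \subseteq \argmin_{P\in\Pc} J^\pi_P$, I would just integrate against the initial distribution. If $v^\pi_{P^\star} = v^\pi_{\Pc}$, then by the definition $v^\pi_{\Pc} = \min_{P\in\Pc} v^\pi_P$ (pointwise) we have $v^\pi_{P^\star}(s) \le v^\pi_P(s)$ for every $s$ and every $P\in\Pc$. Taking the inner product with $\mu \in \Delta_{\St}$ (whose entries are nonnegative) preserves the inequality, yielding $J^\pi_{P^\star} = \langle \mu, v^\pi_{P^\star}\rangle \le \langle \mu, v^\pi_P\rangle = J^\pi_P$, so $P^\star$ is a minimizer of the scalar objective.

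I do not anticipate a real obstacle: the argument is just ``fixed-point of a contraction is unique'' followed by ``pointwise inequality integrates to scalar inequality.'' The only place to be careful is notational, namely to fix the meaning of the middle set and to justify, via \texttt{sa}-rectangularity, that an argmin over the robust Bellman operator is realized by a single kernel in $\Pc$ rather than only pointwise by different kernels at different states.
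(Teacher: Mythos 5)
Your proposal is correct and follows essentially the same route as the paper: identify a minimizer $P^\star$ of $\mathcal{T}^\pi_P v^\pi_{\Pc}$, use the robust fixed-point identity to show $v^\pi_{\Pc}$ is a fixed point of $\mathcal{T}^\pi_{P^\star}$ (the paper phrases this via $v^\pi_{\Pc}=(I-\gamma (P^\star)^\pi)^{-1}R^\pi$, you via uniqueness of the contraction's fixed point---the same fact), and then note the second inclusion is immediate by integrating against $\mu$. Your explicit reading of the middle expression as an argmin set and the remark about rectangularity are reasonable clarifications of what the paper leaves implicit, but they do not change the argument.
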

\begin{proof} 
Let \[P^* \in \argmin_{P\in \Pc}\mathcal{T}^\pi_{P}v^\pi_{\Pc}.\]
Now, from the fixed point of robust Bellman operator, we have
\begin{align*}
v^\pi_{\Pc} = &\mathcal{T}^\pi_{\Pc}v^\pi_{\Pc} = \min_{P\in \Pc}\mathcal{T}^\pi_{P}v^\pi_{\Pc},\\
 &=\mathcal{T}^\pi_{P^*}v^\pi_{\Pc},\qquad \text{(by construction)},\\
 &=R^\pi + \gamma (P^*)^\pi v^\pi_{\Pc},\qquad \text{(by definition)}.\\
\end{align*}
The above implies,
\[ v^\pi_{\Pc} =  \Bigm(I-\gamma (P^*)^\pi\Bigm)^{-1}R^\pi.\]
This implies, 
\[P^*\in \argmin_{P\in \Pc}v^\pi_{P}.\]
The last inclusion is trivial, that is, every minimizer of value function is a minimizer of robust return.
\end{proof}

\klworst*
\begin{proof}
Recall Definition~\ref{dfn:adv_kernel_def}
\begin{equation*}
    P_{\Pc}^\pi \in \argmin_{P\in\Pc} J^\pi_P.
\end{equation*}
From Proposition \ref{app:wortkrnBellmn}, for \texttt{sa}-rectangular uncertainty set $\Pc$, a worst kernel can be computed using robust value function as 
\[P^\pi_\Pc \in \argmin_{P\in \Pc}\mathcal{T}^\pi_{P}v^\pi_{\Pc}.\]

Recall, our KL-constrained uncertainty $\Pc$ is defined as

\[\Pc :=  \{P \mid P \in (\Delta_{\St})^{\St\times\A}, D_{KL}(\bar{P}_{s,a}, P_{sa}) \leq \beta_{sa}, \forall s,a\}.\]

where $D_{KL}$ is KL norm that is defined as 
\[D_{KL}(P,Q) = \sum_{s}P(s)\log\left(\frac{P(s)}{Q(s)}\right).\]

Using Proposition \ref{app:wortkrnBellmn} and definition of uncertainty set $\Pc$, the worst kernel can be extracted as 
\[P^\pi_\Pc(\cdot|s,a) \in \argmin_{D_{KL}(p,P_0(\cdot|s,a)) \leq \beta_{sa}, \sum_{s}p(s)=1, p\succeq 0}\quad \langle p, v^\pi_\Uc\rangle.\]
Using the Lemma~\ref{app:Langrange}, we get the desired solution.
\end{proof}

\begin{lemma} \label{app:Langrange} For $q \in \Delta_{\St},v \in \mathbb{R}^{\St}, \beta \geq 0$, a solution to 
    \[\min_{p\ln(\frac{p}{q}) \leq \beta, 1^Tp =1,  p\succeq 0} \langle p,v\rangle.  \]
    is given by 
    \[  p = qe^{-\frac{v-\omega}{\lambda}},\]
where
\[ p\log(\frac{p}{q}) = \Bigm \langle qe^{-\frac{v-\omega}{\lambda}}, \frac{v-\omega}{\lambda}\Bigm\rangle = -\beta \]
and 
\[\sum_{s}q(s)e^{-\frac{v(s)-\omega}{\lambda}}=1. \]
\end{lemma}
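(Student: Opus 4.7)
The plan is to solve the convex minimization in the lemma by Lagrangian duality. The objective $\langle p, v\rangle$ is linear in $p$, and the feasible set---the intersection of $\Delta_{\St}$ with the KL ball of radius $\beta$ around $q$---is convex; for $\beta > 0$ it has nonempty relative interior since $q$ itself is strictly feasible. Slater's condition therefore holds, strong duality applies, and KKT characterizes the optimum.

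First I would form the Lagrangian
\begin{equation*}
L(p,\lambda,\omega,\mu) = \langle p, v\rangle + \lambda\bigl(D_{KL}(p\|q) - \beta\bigr) + \omega\bigl(\mathbf{1}^\top p - 1\bigr) - \langle \mu, p\rangle,
\end{equation*}
with $\lambda \geq 0$ and $\mu \succeq 0$, and differentiate. Setting $\partial L/\partial p(s) = 0$ gives $v(s) + \lambda\bigl(\log(p(s)/q(s)) + 1\bigr) + \omega - \mu_s = 0$. Finiteness of the KL divergence forces $\mathrm{supp}(p)\subseteq\mathrm{supp}(q)$, and the resulting exponential-tilt form is automatically strictly positive there; complementary slackness therefore gives $\mu_s = 0$ on $\mathrm{supp}(q)$. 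After absorbing the additive constant $-\omega - \lambda$ into a single relabeled multiplier (still called $\omega$), the stationarity equation yields exactly $p(s) = q(s)\exp\bigl(-(v(s) - \omega)/\lambda\bigr)$, the claimed form.

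The two constants $\omega$ and $\lambda$ are then pinned down by two equations. Normalization $\mathbf{1}^\top p = 1$ gives $\sum_s q(s) e^{-(v(s)-\omega)/\lambda} = 1$. For the KL equation, I would argue the constraint is active: if it were slack, one could perturb $p$ along a descent direction of the linear objective while staying inside the KL ball, contradicting optimality (the only exception being the degenerate case where $v$ is constant on $\mathrm{supp}(q)$, in which the claim holds trivially with $\lambda\to\infty$). Setting $D_{KL}(p\|q) = \beta$ and substituting the tilt identity $\log(p/q) = -(v-\omega)/\lambda$ gives $\sum_s p(s)\cdot[-(v(s)-\omega)/\lambda] = \beta$, which is precisely the stated condition $\langle q e^{-(v-\omega)/\lambda},\,(v-\omega)/\lambda\rangle = -\beta$.

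The only nontrivial modeling points are the activeness of the KL constraint and the strict positivity of the optimum on $\mathrm{supp}(q)$; everything else is routine KKT bookkeeping. This is essentially the classical derivation of the Gibbs/exponential-family form as the extremizer of a linear functional subject to a KL constraint (dual to the max-entropy-with-mean-constraint problem), so no deep new idea beyond a careful application of convex duality is required.
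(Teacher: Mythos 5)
Your proposal is correct and follows essentially the same route as the paper: form the Lagrangian, use stationarity to obtain the exponential tilt $p = q e^{-(v-\omega)/\lambda}$, and pin down $\omega,\lambda$ from the normalization and KL constraints. You are in fact somewhat more careful than the paper's version—explicitly invoking Slater's condition, justifying that the KL constraint is active, and handling $p \succeq 0$ via complementary slackness rather than simply deferring it—but these are refinements of the same argument, not a different approach.
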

\begin{proof} We have the following optimization problem,
\begin{align*}
    \min_{p\ln(\frac{p}{q}) \leq \beta, 1^Tp =1,  p\succeq 0} \langle p,v\rangle.  
\end{align*}
We ignore the constraint $ p\succeq 0$ for the moment (as we see later, this constrained is automatically satisfied), and focus on 
    \begin{align*}
    \min_{p\ln(\frac{p}{q}) \leq \beta, 1^Tp =1} \langle p,v\rangle.
\end{align*}

We define Lagrange multiplier as 
\[L(p,\lambda,\mu) = \langle p,v\rangle + \lambda \Bigm(p\ln(\frac{p}{q}) - \beta\Bigm) + \mu \Bigm(1^Tp -1\Bigm). \]
We now put the stationarity condition:
\begin{align*}
    &\frac{\partial L}{\partial p} = v + \lambda \Bigm(\ln(\frac{p}{q}) + 1 \Bigm) + \mu 1 = 0\\
\implies & p = qe^{-1}e^{-\frac{v+\mu}{\lambda}}.\\
\end{align*}
With appropriate change of variable $\mu \to \omega$, we have
\[  p = qe^{-\frac{v-\omega}{\lambda}}.\]
We have to find the constants $\omega$ and $\lambda$, using the constraints
\[ p\log(\frac{p}{q}) = \Bigm \langle qe^{-\frac{v-\omega}{\lambda}}, \frac{v-\omega}{\lambda}\Bigm\rangle = -\beta\]
and 
\[ \sum_{s} p(s) = \sum_{s}q(s)e^{-\frac{v(s)-\omega}{\lambda}} = 1.\]
We further note that the constraint $1\geq p(s)\geq 0$ is automatically satisfied as 
\[ p(s) = q(s)e^{-\frac{v(s)-\omega}{\lambda}} \geq 0 \]
and $\sum_{s}p(s) =1$, ensures $p(s)\leq 1 \quad \forall s$.
\end{proof}

\subsection{Proof of Proposition~\ref{prop:meanvar} and ~\ref{prop:meanbound}}

\begin{proposition}\label{rs:MeanUpperBound} $\omega_{sa}$ can be upper-bounded as follows,
    \[\omega_{sa} \leq \langle \bar{P}(\cdot|s,a), v^\pi_\Pc\rangle,\qquad \forall s\in\St,a\in\A.\]
\end{proposition}
\begin{proof}
From the constraint in Theorem~\ref{thm:kl}, we have
\begin{align*}
&\sum_{s'}\bar{P}(s'|s,a)e^{-\frac{v^\pi_\Pc(s')-\omega_{sa}}{\kappa_{sa}}} = 1 \\
\implies & e^{-\sum_{s'}\bar{P}(s'|s,a)\frac{v^\pi_\Pc(s')-\omega_{sa}}{\kappa_{sa}}} \leq 1 \qquad \text{(using Jenson's inequality)} \\
\implies & e^{-\sum_{s'}\bar{P}(s'|s,a)\frac{v^\pi_\Pc(s')}{\kappa_{sa}}}e^{\frac{\omega_{sa}}{\kappa_{sa}}} \leq 1 \\
\implies & \frac{\omega_{sa}}{\kappa_{sa}} \leq \sum_{s'}\bar{P}(s'|s,a)\frac{v^\pi_\Pc(s')}{\kappa_{sa}}\\
\implies & \omega_{sa} \leq \sum_{s'}\bar{P}(s'|s,a)v^\pi_\Pc(s'). \qedhere
\end{align*}
\end{proof}

\meanvar*
\begin{proof}
From the constraint in Theorem~\ref{thm:kl}, we have
\begin{align*}
& \sum_{s'} \bar{P}^\pi(s'|s,a)e^{-\delta^\pi(s')}(-\delta^\pi(s'))=\beta_{sa} \\
\implies & \sum_{s'} P^\pi_\Pc(s'|s,a)\frac{v^\pi_\Pc(s')-\omega_{sa}}{\kappa_{sa}} =\beta_{sa} \\
\implies&\sum_{s'}P^\pi_\Pc(s'|s,a)v^\pi_\Pc(s') =-\beta_{sa}\kappa_{sa} + \omega_{sa}. \qedhere
\end{align*}    
\end{proof}

\meanbound*
\begin{proof}
    The lower bound is direct from Proposition~\ref{prop:meanvar}, as $\beta$ and $\kappa$ are positive quantities by definition. The upper bound comes from Proposition \ref{rs:MeanUpperBound}.
\end{proof}


\subsection{Proof of Theorem~\ref{thm:kernelconv}}

Given a policy $\pi$, let $P_{n+1}$ be the updated kernel:
\begin{align*}
    P_{n+1} = \argmin_{P\in\Pc}T_P^{\pi}v^\pi_{P_n}.
\end{align*}
We continue to prove the following lemmas.
\begin{lemma}
\label{lemma:monotone_value}
The kernel update process produces monotonically decreasing value functions:
\begin{equation*}
    v_{P_n}^{\pi} \succeq v_{P_{n+1}}^{\pi},\quad \forall\,n=1,2,\cdots.
\end{equation*}
\end{lemma}
\begin{proof}   
Recall that $v_{P_n}^{\pi} = T_{P_n}^{\pi}v_{P_n}^{\pi} = R^{\pi} + \gamma P_n^{\pi}v_{P_n}^{\pi}$.
Since we have
\begin{equation*}
    P_{n+1} = \argmin_{P\in\Pc}[R^{\pi} + \gamma P^{\pi}v_{P_n}^{\pi}],
\end{equation*}
we can obtain
\begin{align*}
    & & R^{\pi} + \gamma P_n^{\pi}v_{P_n}^{\pi} &\geq  \min_{P\in\Pc}[R^{\pi} + \gamma P^{\pi}v_{P_n}^{\pi}]\\
    &\Rightarrow & v_{P_n}^{\pi} &\geq R^{\pi} + \gamma P_{n+1}^{\pi}v_{P_n}^{\pi}\\
    &\Rightarrow & (I-\gamma P_{n+1}^{\pi})v_{P_n}^{\pi} &\geq R^{\pi}\\
    &\Rightarrow & v_{P_n}^{\pi} &\geq (I-\gamma P_{n+1}^{\pi})^{-1}R^{\pi} = v_{P_{n+1}}^{\pi}. \qedhere
\end{align*}
\end{proof}

\begin{lemma}
\label{lemma:operator_monotone}
The robust bellman operators are monotonic functions, that is:
$$v \leq u\quad \Longrightarrow\quad T_{\Pc}^{\pi}v \leq T_{\Pc}^{\pi}u$$
\end{lemma}
\begin{proof}
Since $v\leq u$, and the fact that $P$ has only non-negative entries, we know that:
\begin{align*}
    & & R^\pi + \gamma P^{\pi} v &\leq R^\pi + \gamma P^{\pi}u, \quad \forall P\in\Pc\\
    &\Rightarrow & \min_{P\in \Pc}(R^\pi + \gamma P^{\pi} v) &\leq \min_{P\in \Pc}(R^\pi + \gamma P^{\pi}u)\\
    &\Rightarrow & T_{\Pc}^{\pi}v &\leq T_{\Pc}^{\pi}u
\end{align*}
\end{proof}

\kernelconv*
\begin{proof}   
We prove it by showing that:
\begin{equation*}
    \norm{v^\pi_{P_{n+1}} - v_{\Pc}^{\pi}}_{\infty} \leq \gamma \norm{v^\pi_{P_n} - v_{\Pc}^{\pi}}_{\infty}, \quad \forall\,n.
\end{equation*}
First, by optimality, we have
\begin{equation*}
    v^\pi_{P_{n+1}} - v_{\Pc}^{\pi} \geq 0.
\end{equation*}
Now we can focus only on the upper bound:
\begin{align*}
    v^\pi_{P_{n+1}} - v_{\Pc}^{\pi} &=  T_{P_{n+1}}^{\pi}v^\pi_{P_{n+1}} - T_{\Pc}^{\pi}v_{\Pc}^{\pi}\\
    & \leq T_{P_{n+1}}^{\pi}v^\pi_{P_n} - T_{\Pc}^{\pi}v_{\Pc}^{\pi} & \text{(Lemma~\ref{lemma:monotone_value} and \ref{lemma:operator_monotone})} \\
    & = \min_{P\in\Pc}T_{P}^{\pi}v^\pi_{P_n} - T_{\Pc}^{\pi}v_{\Pc}^{\pi}\\
    & = T_{\Pc}^{\pi}v^\pi_{P_n} - T_{\Pc}^{\pi}v_{\Pc}^{\pi}\\
    & \leq \gamma \norm{v^\pi_{P_n} - v_{\Pc}^{\pi}}_{\infty } & \text{($T_{\Pc}^{\pi}$ is a $\gamma$-contraction operator)}.
\end{align*}
Putting it together, we have
\begin{equation*}
    \norm{v^\pi_{P_{n+1}} - v_{\Pc}^{\pi}}_{\infty} \leq  \gamma \norm{v^\pi_{P_{n}} - v_{\Pc}^{\pi}}_{\infty}.
\end{equation*}
The desired result is proved by applying the above result iteratively.
\end{proof}


\clearpage


\section{Experiment details}
\label{app:exp-details}

\subsection{Cliff Walking}
\subsubsection{Environment description}
Cliff Walking~\footnote{\url{https://gymnasium.farama.org/environments/toy_text/cliff_walking/}} is a tabular environment from OpenAI's Gym~\citep{brockman2016openai}. Usually, this environment transition kernel is deterministic. To create a distributional uncertainty set around the nominal kernel, we We introduced stochasticity to the transition kernel. Specifically, we decreased the probability of moving in the intended direction from 1 to 0.9. Additionally, we introduced a 0.02 probability of moving in the opposite direction (e.g., Up instead of Down), and the remaining 0.08 probability is evenly distributed between moving sideways (e.g., Left or Right instead of Up). Consequently, there is a 0.04 probability of moving in any of these lateral directions.

In terms of reward - the agent receives a $-1$ penalty every time step before reaching the goal state. When it does, it encounters a reward of $100$, and each time the agent falls off the cliff, it suffers from a penalty of $-10$ and will be teleported to the initial state.

\subsubsection{Worst Environment}
\label{app:worst-env}

To determine the worst transition kernel, we computed, for each $(s, a)$ pair, the updated worst transition. This involved encouraging actions that would move the agent closer to the cliff. For instance, if the agent is positioned adjacent to the cliff and moves upward, we would try to find a transition kernel such that the probability of moving downward is maximal (while staying within the bounds of the uncertainty set).

Specifically Given $p\in\Delta_{\St}$ we want to find $q$ by solving the following optimization problem:
\begin{align*}
& \max\ q[i] \\
\mathrm{s.t.} & \sum_{s\in\St}q[s] = 1 \\
&q[s]\geq0 \forall s\in\St \\
&D_{KL}(q||p)\leq\beta
\end{align*}
where $i$ is the outcome of getting closer to the cliff.
\subsubsection{Q-Learning Hyperparameters}
To obtain the optimal policy we used Value iteration ~\citep{Puterman1994MarkovDP}, using the access to the true kernel (either nominal or worst).
To learn the robust policy we used a simple Q-learning algorithm ~\citep{watkins1992q} using samples from the nominal kernel (while allowing multiple samples for any timesteps). The detailed configurations for hyperparameters used in this experiment are summarized in Table~\ref{tab:Cliff_params}.

\begin{table}[ht]
    \centering
    \caption{Hyperparameters used in Cliff Walking environment}
    \vspace{5pt}
    \begin{tabular}{cc}
    \toprule
    \textsc{Parameter} & \textsc{value}\\
    \midrule
    \multicolumn{2}{l}{\textit{Q-Learning Hyperparameters}} \\
    \midrule
    Number of episodes  & $20000$ \\
    Learning rate & $0.01$ \\
    Discount factor ($\gamma$) & $0.8$ \\
    Exploration factor $\epsilon$ & $0.2$ \\
    Value Error stopping condition & $1e^{-6}$ \\
    \midrule
    \multicolumn{2}{l}{\textit{Uncertainty set parameters}} \\
    \midrule
    $\beta$ & $0.4$ \\
    \midrule
    \multicolumn{2}{l}{\textit{EWoK Hyperparameters}} \\
    \midrule
    Number of samples ($N$) & $5$ \\
    $\kappa$ & $0.4$ \\
    \bottomrule
    \end{tabular}
    \label{tab:Cliff_params}
\end{table}

\subsection{Environments}
\label{app:env}
\subsubsection{Classic control tasks}

Cartpole~\footnote{\url{https://gymnasium.farama.org/environments/classic_control/cart_pole/}} is one of the classic control tasks in OpenAI Gym~\citep{brockman2016openai}.
The task is to balance a pendulum on a moving cart, by moving the cart either left or right.
The state consists of the location and velocity of the cart, as well as the angle and angular velocity of the pendulum.
To make the transition dynamic stochastic, we add Gaussian noises to the cart position.
The detailed configurations for the nominal values and the perturbation ranges are summarized in Table~\ref{tab:Cartpole_params}.

\begin{table}[ht]
    \centering
    \caption{Perturbation configurations for Cartpole environment.}
    \vspace{5pt}
    \begin{tabular}{cccc}
    \toprule
    & \textsc{Parameter} & \textsc{Nominal value} & \textsc{Perturbation range} \\
    \midrule
    \makecell{\textsc{Noise}\\ \textsc{pertubration}} & Cart position noise (std) & $0.01$ & $[0,0.1]$ \\
    \midrule
    \multirow{4}{*}{\makecell{\textsc{Env. param.}\\ \textsc{pertubration}}} & Pole mass & $0.1$ & $[0.15,3.0]$ \\
    & Pole length  & $0.5$ & $[0.25,5.0]$ \\
    & Cart mass  & $1$ & $[0.25,5.0]$ \\
    & Gravity  & $9.8$ & $[0.1,30]$ \\
    \bottomrule
    \end{tabular}
    \label{tab:Cartpole_params}
\end{table}

\subsubsection{DeepMind Control Suite}

The DeepMind Control Suite~\citep{tunyasuvunakool2020} is a set of continuous control tasks powered by the MuJoCo physics engine~\citep{todorov2012mujoco}.
It is widely used to benchmark reinforcement learning agents.
As mentioned in the main text, we consider 3 tasks in our paper: \texttt{walker-stand}, \texttt{walker-walk}, \texttt{walker-run}.
For those tasks, the observations are 24-dimensional vectors and the actions are 6-dimensional vectors.
For noise perturbation, we fix the standard deviation of the Gaussian noise to 0.2.
The nominal value and the perturbation range are summarized in Table~\ref{tab:Walker_params}.

\begin{table}[ht]
    \centering
    \caption{Perturbation configurations for DeepMind Control Suite tasks.}
    \vspace{5pt}
    \begin{tabular}{cccc}
    \toprule
    & \textsc{Parameter} & \textsc{Nominal value} & \textsc{Perturbation range} \\
    \midrule
    \makecell{\textsc{Noise} \\ \textsc{pertubration}} & action noise (mean) & $0.0$ & $[-0.3,0.3]$ \\
    \midrule
     \multirow{3}{*}{\makecell{\textsc{Env. param.} \\ \textsc{pertubration}}} & thigh length & $0.225$ & $[0.1, 0.5]$ \\
     & torso length & $0.3$ & $[0.1,0.7]$ \\
     & joint damping & $0.1$ & $[0.1,10]$ \\
    \bottomrule
    \end{tabular}
    \label{tab:Walker_params}
\end{table}

\subsection{Training and evaluation}

For our method and the baseline, we first train the agent under the nominal environment, and then for each perturbed environment during testing, we calculate the average reward from 30 episodes.
We repeat this process with 40 random seeds for DDQN experiments and 20 random seeds for PPO experiments in the classic control environments.
For DeepMind Control environments we used 10 different seeds for SAC experiments and 5 different seeds for TD3 experiments.
Following the recommended practice in~\citep{agarwal2021deep}, we report the Interquartile Mean (IQM) and the $95\%$ stratified bootstrap confidence intervals (CIs), using 
The IQM metric is measured by discarding the top and bottom $25\%$ of the results, and averaging across the remaining middle $50\%$.
IQM has the benefit of being more robust to outliers than a regular mean, and being a better estimator of the overall performance than the median.
We use the rliable library\footnote{\url{https://github.com/google-research/rliable}} to calculate IQM and CIs.



As mentioned earlier, we use Double-DQN~\citep{ddqn} and PPO~\citep{schulman2017proximal} as the vanilla non-robust RL algorithms for environments with discrete action spaces.
Specifically, we follow the implementation in Stable-Baselines3~\citep{stable-baselines3}.
For the Cartpole environment, we use the hyperparameters suggested in RL Baselines3 Zoo~\citep{rl-zoo3}. The detailed configurations are summarized in Table~\ref{tab:DQN_params} and Table~\ref{tab:PPO_params}.

\begin{table}[ht]
    \centering
    \caption{Hyperparameters for training cartpole with DDQN used in the experiments.}
    \vspace{5pt}
    \begin{tabular}{ccc}
    \toprule
    \textsc{Parameter} & \textsc{Value} \\
     \midrule
     batch size   & 64   \\
     buffer size&   100000\\
     exploration final epsilon&0.04\\
     exploration fraction&0.16 \\
     gamma&   0.99  \\
     gradient steps& 128   \\
     learning rate& 0.0023  \\
     learning starts& 1000 \\
     target update interval& 10  \\
     train frequency& 256 \\
     total time-steps& 50000\\
    \bottomrule
    \end{tabular}
    \label{tab:DQN_params}
\end{table}

\begin{table}[ht]
    \centering
    \caption{Hyperparameters for training cartpole with PPO used in the experiments.}
    \vspace{5pt}
    \begin{tabular}{ccc}
    \toprule
    \textsc{Parameter} & \textsc{Value} \\
     \midrule
    number of parallel environments   & 1   \\
     batch size   & 32   \\
     Number of steps  & 32   \\
     gae lambda   & 0.98   \\
     gamma&   0.98  \\ 
     Number of epochs&   20  \\ 
     entropy coefficient&   0 \\ 
     clip range&   0.2 \\ 
     learning rate& 0.001  \\
     total time-steps& 100000\\
    \bottomrule
    \end{tabular}
    \label{tab:PPO_params}
\end{table}

For environments with continuous action spaces, we choose the SAC algorithm~\citep{haarnoja2018soft} and TD3~\citep{fujimoto2018addressing} as the vanilla non-robust RL algorithms.
For SAC, we follow the implementations and hyperparameters choices in~\citep{pytorch_sac}.
Both the actor and critic use a two-layer MLP neural network with 1024 hidden units per layer. Table~\ref{tab:SAC_params} lists the hyperparameters.
For TD3, we follow the implementations and hyperparameters choices in~\citep{stable-baselines3}.
Table~\ref{tab:TD3_params} lists the hyperparameters.

\begin{table}[ht]
    \centering
    \caption{Hyperparameters for SAC used in the experiments.}
    \vspace{5pt}
    \begin{tabular}{cc}
    \toprule
    \textsc{Parameter} & \textsc{Value} \\
    \midrule
    Total steps & 1e6 \\
    Warmup steps & 5000 \\
    Replay size & 1e6 \\
    Batch size & 1024 \\
    Discount factor $\gamma$ & 0.99 \\
    Optimizer & Adam~\citep{kingma14adam} \\
    Learning rate & 1e-4 \\
    Target smoothing coefficient & 0.005 \\
    Target update interval & 2 \\
    Actor update interval & 1 \\
    Initial temperature & 0.1 \\
    Learnable temperature & Yes \\
    \bottomrule
    \end{tabular}
    \label{tab:SAC_params}
\end{table}

\begin{table}[ht]
    \centering
    \caption{Hyperparameters for TD3 used in the experiments.}
    \vspace{5pt}
    \begin{tabular}{cc}
    \toprule
    \textsc{Parameter} & \textsc{Value} \\
    \midrule
    Total steps & 1e6 \\
    Warmup steps & 5000 \\
    Replay size & 1e6 \\
    Batch size & 1024 \\
    Discount factor $\gamma$ & 0.99 \\
    Optimizer & Adam~\citep{kingma14adam} \\
    Learning rate & 1e-4 \\
    Target smoothing coefficient & 0.005 \\
    Target update interval & 2 \\
    Actor update interval & 2 \\
    Target policy noise std & 0.2 \\
    Target policy noise clip & 0.5 \\
    \bottomrule
    \end{tabular}
    \label{tab:TD3_params}
\end{table}

The configurations for the sample size $N$ and the robustness parameter $\kappa$ used in our experiments are summarized in Table~\ref{tab:AKA_params}.

\begin{table}[ht]
    \centering
    \caption{Hyperparameters specific to our method used in the experiments.}
    \vspace{5pt}
    \begin{tabular}{cccc}
    \toprule
    & \textsc{Environment} & \textsc{sample size $N$} & \textsc{robustness parameter $\kappa$} \\
    \midrule
      \multirow{2}{*}{\makecell{\textsc{classic contro}}} & Cartpole (DDQN)& $15$ & $0.1$ \\
     & Cartpole (PPO) & $15$ & $0.2$ \\
      \midrule
      \multirow{3}{*}{\makecell{\textsc{DeepMind} \\ \textsc{Control} \\ \textsc{Suite}}} & walker-stand (SAC \& TD3)& $10$ & $0.2$ \\
      & walker-walk (SAC \& TD3)& $10$ & $0.2$ \\
      & walker-run (SAC \& TD3)& $10$ & $0.2$ \\
  \bottomrule
    \end{tabular}
    \label{tab:AKA_params}
\end{table}

\subsection{Computational resources and costs}

We used the following resources in our experiments:
\begin{itemize}
    \item \textbf{CPU:} AMD EPYC 7742 64-Core Processor
    \item \textbf{GPU:} NVIDIA GeForce RTX 2080 Ti
\end{itemize}
Table~\ref{tab:time cost} lists the training time.

\begin{table}[ht]
    \centering
    \caption{Training time per run of our experiments on a single GPU.}
    \vspace{5pt}
    \begin{tabular}{cccc}
    \toprule
    & \textsc{Environment} & \textsc{Baseline} & \textsc{EWoK(Ours)} \\
    \midrule
      \textsc{classic control (DDQN)} & Cartpole & $\sim$ 4 minutes & $\sim$ 5 minutes \\
      \textsc{classic control (PPO)} & Cartpole & $\sim$ 60 minutes & $\sim$ 60 minutes \\
      \midrule
      \textsc{DM Control (SAC)}
      & all tasks & $\sim$ 5 hours & $\sim$ 6 hours \\ 
      \textsc{DM Control (TD3)}
      & all tasks & $\sim$ 6 hours & $\sim$ 7 hours \\
  \bottomrule
    \end{tabular}
    \label{tab:time cost}
\end{table}


\clearpage
\section{Additional results}
\label{app:all-results}

In section~\ref{subsection:env_params} we show the result of testing EWoK on different parameters perturbation. Here we show the results of all of the experimented parameters we have perturbed during test time. The results for SAC algorithm can be seen in Figure~\ref{fig:dm_sac} and the results for TD3 algorithm can be seen in Figure~\ref{fig:dm_td3}.

In section \ref{subsection:ablation}, we show the relative performance for the ablation study on parameter $\kappa$ and $N$ for two environments only. In Figures~\ref{fig:appe-ablation-kappa} and ~\ref{fig:appe-ablation-N} we depict the full results for all of the 3 environments. In Figures~\ref{fig:appe-ablation-kappa-absolute} and ~\ref{fig:appe-ablation-N-absolute} we also include the absolute results (with the reference baseline algorithm).

\begin{figure}[ht]
\centering
\includegraphics[width=0.75\linewidth]{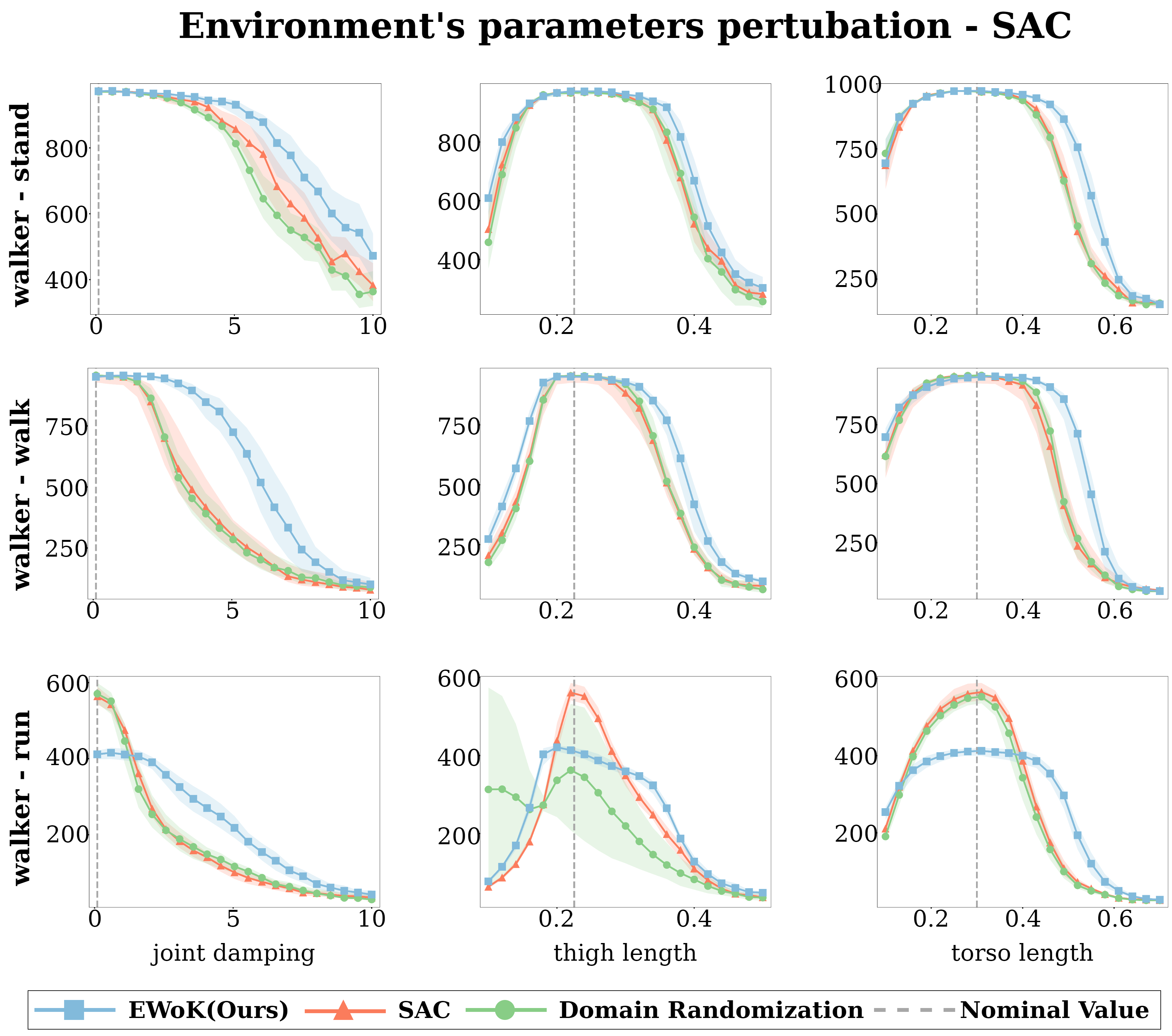}
\caption{Evaluation results on DeepMind Control environments with environment's parameters perturbations for SAC algorithm.}
\label{fig:dm_sac}
\end{figure}

\begin{figure}[ht]
\centering
\includegraphics[width=0.75\linewidth]{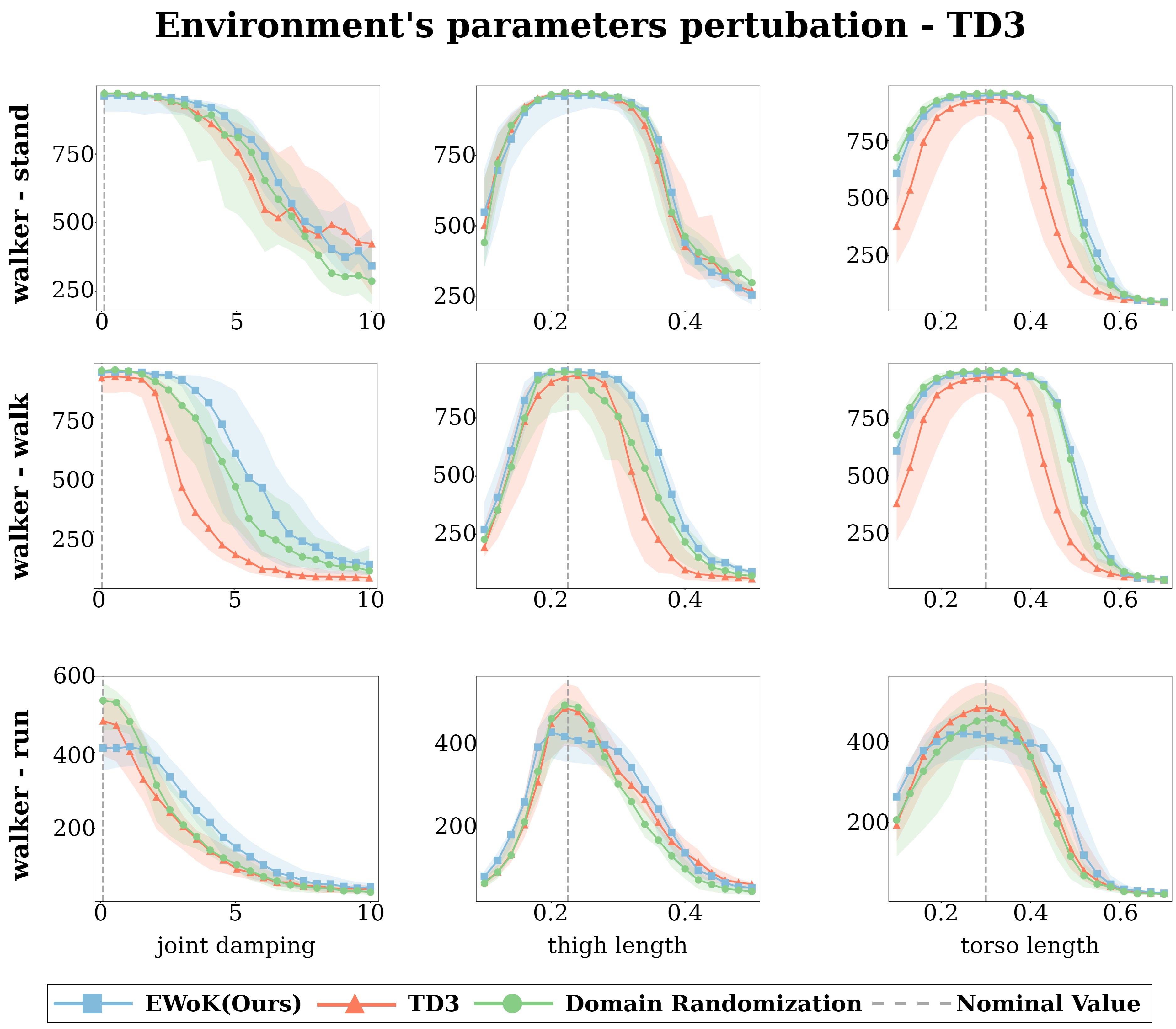}
\caption{Evaluation results on DeepMind Control environments with environment's parameters perturbations for TD3 algorithm.}
\label{fig:dm_td3}
\end{figure}


\begin{figure}[ht]
\centering
\includegraphics[width=0.75\linewidth]{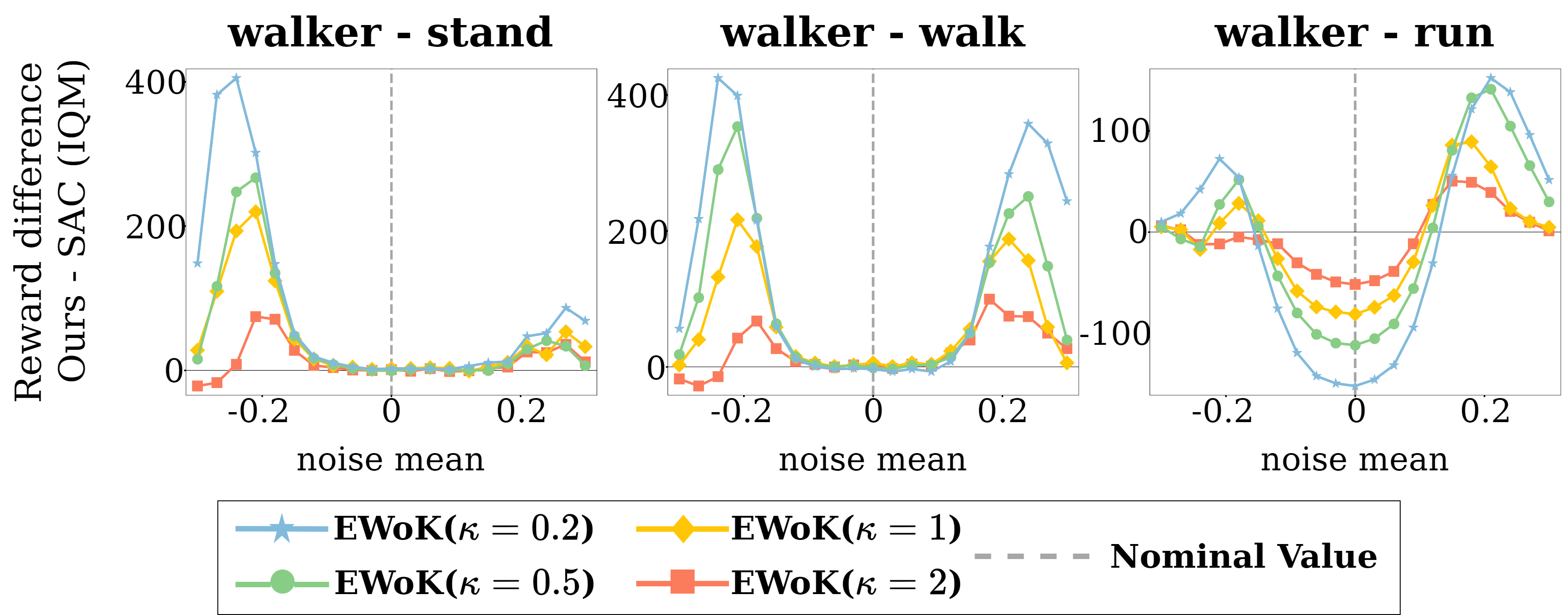}
\caption{Evaluation results on DeepMind Control tasks with noise perturbations for different $\kappa$.}
\label{fig:appe-ablation-kappa}
\end{figure}

\begin{figure}[ht]
\centering
\includegraphics[width=0.75\linewidth]{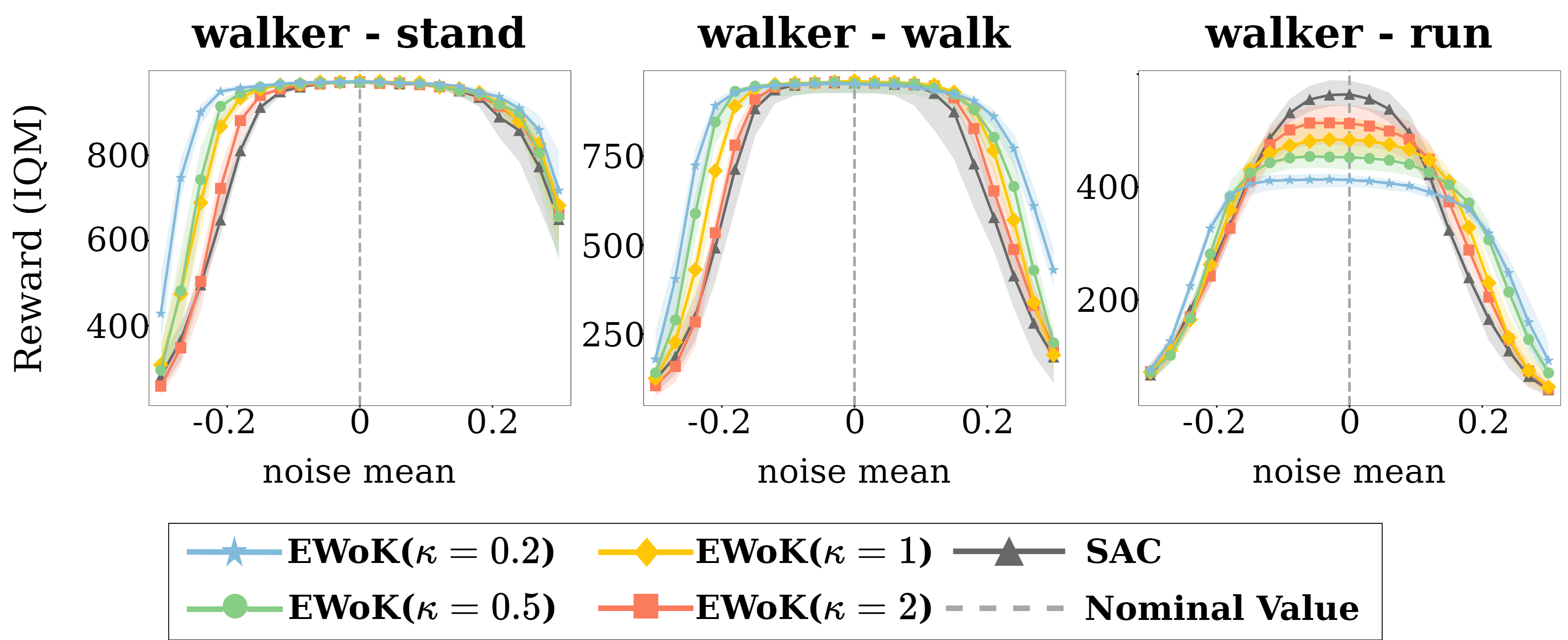}
\caption{Evaluation results (absolute) on DeepMind Control tasks with noise perturbations for different $\kappa$.}
\label{fig:appe-ablation-kappa-absolute}
\end{figure}

\begin{figure}[ht]
\centering
\includegraphics[width=0.75\linewidth]{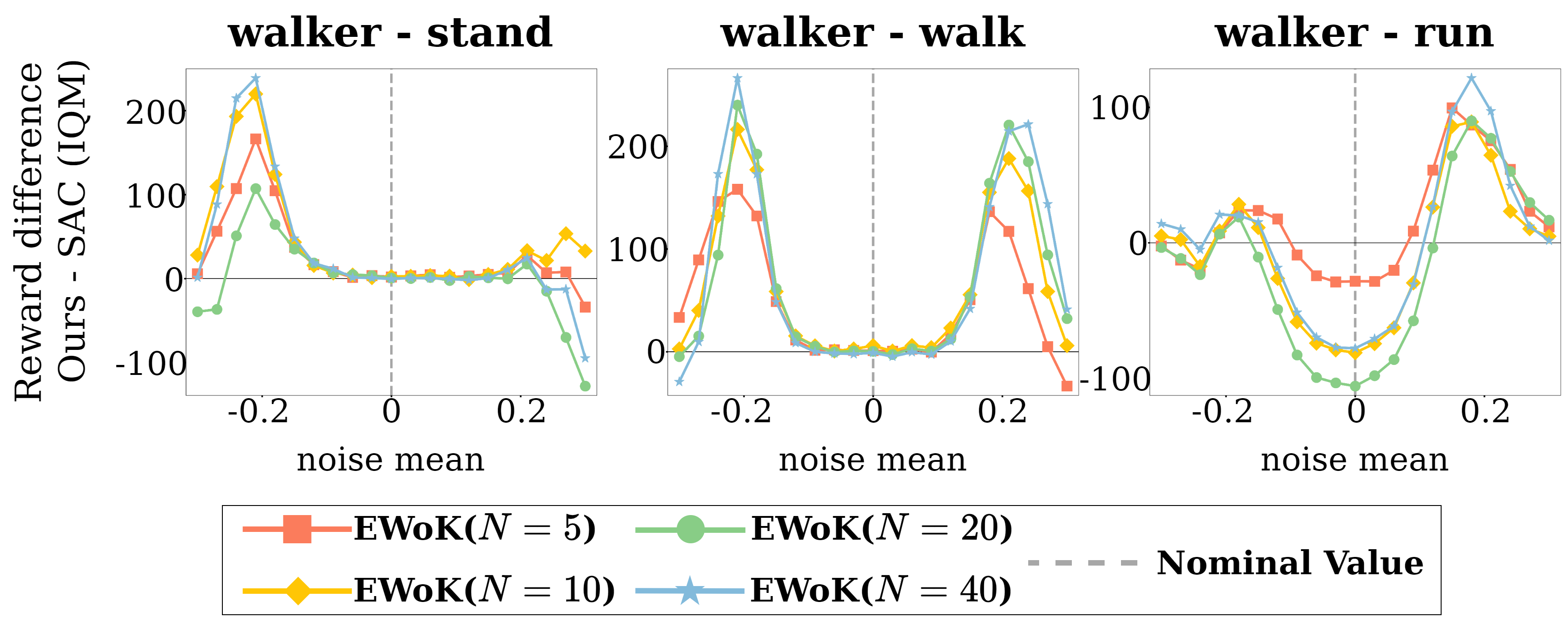}
\caption{Evaluation results on DeepMind Control tasks with noise perturbations for different $N$.}
\label{fig:appe-ablation-N}
\end{figure}

\begin{figure}[ht]
\centering
\includegraphics[width=0.75\linewidth]{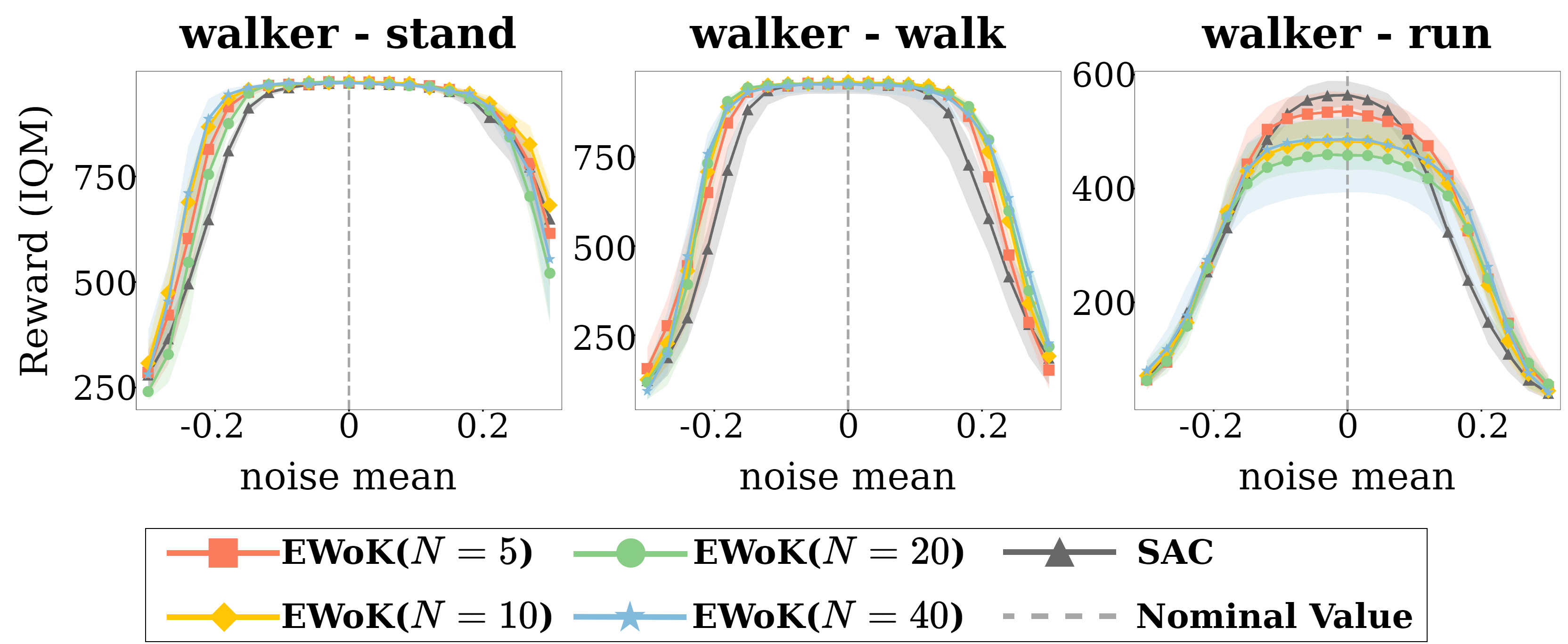}
\caption{Evaluation results (absolute) on DeepMind Control tasks with noise perturbations for different $N$.}
\label{fig:appe-ablation-N-absolute}
\end{figure}



\end{document}